\newcommand{\R}{\mathbb{R}}
\newcommand{\E}{\mathbb{E}}
\newtheorem{proposition}{Proposition}
\newtheorem{definition}{Definition}
\newtheorem{lemma}{Lemma}
\newtheorem{remark}{Remark}
\def\eqref#1{equation~\ref{#1}}
\def\1{\bm{1}}
\DeclareMathAlphabet{\mathsfit}{\encodingdefault}{\sfdefault}{m}{sl}
\SetMathAlphabet{\mathsfit}{bold}{\encodingdefault}{\sfdefault}{bx}{n}
\title{Geometric Insights into the Convergence of Nonlinear TD Learning}
\author{David Brandfonbrener\\
Courant Institute of Mathematical Sciences\\
New York University\\
\texttt{david.brandfonbrener@nyu.edu}
\And
Joan Bruna\\
Courant Institute of Mathematical Sciences\\
Center for Data Science \\
New York University\\
\texttt{bruna@cims.nyu.edu}
}
\begin{document}

\maketitle


\begin{abstract}
While there are convergence guarantees for temporal difference (TD) learning when using linear function approximators, the situation for nonlinear models is far less understood, and divergent examples are known. 
Here we take a first step towards extending theoretical convergence guarantees to TD learning with nonlinear function approximation. More precisely, we consider the expected learning dynamics of the TD(0) algorithm for value estimation. 
As the step-size converges to zero, these dynamics are defined by a nonlinear ODE which depends on the geometry of the space of function approximators, the structure of the underlying Markov chain, and their interaction.  
We find a set of function approximators that includes ReLU networks and has geometry amenable to TD learning regardless of environment, so that the solution performs about as well as linear TD in the worst case.
Then, we show how environments that are more reversible induce dynamics that are better for TD learning and prove global convergence to the true value function for well-conditioned function approximators. 
Finally, we generalize a divergent counterexample to a family of divergent problems to demonstrate how the interaction between approximator and environment can go wrong and to motivate the assumptions needed to prove convergence. 
\end{abstract}

\section{Introduction}
The instability of reinforcement learning (RL) algorithms is well known, but not well characterized theoretically. Notably, there is no guarantee that value estimation by temporal difference (TD) learning converges when using nonlinear function approximators, even in the on-policy case. 
The use of a function approximator introduces a projection of the tabular TD update into the class of representable functions. Since the dynamics of TD do not follow the gradient of any objective function, the interaction of the geometry of the function class with that of the TD algorithm in the space of all functions potentially eliminates any convergence guarantees. 

This lack of convergence has motivated many authors to seek variants of TD learning that re-establish convergence guarantees, such as two timescale algorithms. In contrast, in this work we focus on TD learning directly and examine its behavior under generic function approximation. We consider the simplest case: on-policy discounted value estimation. To further simplify the analysis, we only consider the expected learning dynamics in continuous time as opposed to the online algorithm with sampling. This means that we are eschewing discussions of off-policy data, exploration, sampling variance, and step size. In this continuous limit, the dynamics of TD learning are modeled as a (nonlinear) ODE. Stability of this ODE is a pre-requisite for convergence of the algorithm. However, for general approximators and MDPs it can diverge as demonstrated by \citet{Tsitsiklis}. Today, the convergence of this ODE is known in two regimes: under linear function approximation for general environments \citep{Tsitsiklis} and under reversible environments for general function approximation \citep{Ollivier}. 
We significantly close this gap through the following contributions:
\begin{enumerate}
    \item  {We prove that the set of smooth homogeneous functions, including ReLU networks, is amenable to the expected dynamics of TD. In this case, the ODE is attracted to a compact set containing the true value function. Moreover, when we use a parametrization inspired by ResNets, nonlinear TD will have error comparable to linear TD in the worst case. }
    \item We prove global convergence to the true value function when the environment is ``more reversible'' than the function approximator is ``poorly conditioned''. 
    \item We generalize a divergent TD example to a broad class of non-reversible environments.
\end{enumerate}
These results begin to explain how the geometry of nonlinear function approximators and the structure of the environment interact with TD learning.


\section{Setup}

\subsection{Deriving the dynamics ODE}
We consider the problem of on-policy value estimation. 
Define a Markov reward process (MRP) $ \mathcal{M} = (\mathcal{S}, P, r, \gamma)$ where $ \mathcal{S}$ is the state space with $ |\mathcal{S}| = n$ finite, $ P(s'|s) $ is the transition matrix, $ r(s,s')$ is the finite reward function, and $ \gamma \in [0,1)$ is the discount factor. This is equivalent to a Markov decision process with a fixed policy. Throughout we assume that $ P $ defines an irreducible, aperiodic Markov chain with stationary distribution $ \mu$. We want to find the true value function: $
    V^*(s) := \E [ \sum_{t=0}^\infty \gamma^t r(s_t, s_{t+1})| s_0 = s ]$,
where the expectation is taken over transitions from $ P$ \footnote{In RL the true value function is often denoted $ V^\pi$ for a policy $\pi$. Our MRP can be thought of as an MDP with fixed policy, so $ \pi$ is part of the environment and we use $ V^*$ to emphasize that the objective is to find $ V^*$.}. The true value  function satisfies the Bellman equation:
\begin{align}
   V^*(s) = \E_{s' \sim P(\cdot|s)}[r(s,s') + \gamma V^*(s')].
\end{align}
It will be useful to think of the value function as a vector in $ \R^n$ and to define $ R(s) = \E_{s'\sim P(\cdot|s)}[r(s,s')]$ so that the Bellman equation becomes $V^* = R + \gamma P V^*$.

The most prominent algorithm for estimating $ V^* $ is temporal difference (TD) learning, a form of dynamic programming \citep{sb}. 
In the tabular setting (with no function approximation) the TD(0) variant of the algorithm with learning rates $ \alpha_k $ makes the following update at iteration $ k+1$:
\begin{align}
   V_{k+1}(s) = V_k(s) + \alpha_k (r(s, s') + \gamma V_k(s') - V_k(s)).
\end{align}
Under appropriate conditions on the learning rates and noise we have that $ V_k \to V^* $ as $ k \to \infty$ \citep{robbins, Sutton}. Moreover, under these conditions the algorithm is a discretized and sampled version of the expected continuous dynamics of the following ODE. 
\begin{align}
   \dot V(s) = \mu(s) (R(s) + \gamma \E_{s' \sim P(\cdot |s)}[V(s')] - V(s)).
\end{align}
Letting $ D_\mu $ be the matrix with $ \mu $ along the diagonal and applying the Bellman equation we get
\begin{align}\label{tabular_ODE}
   \dot V = D_\mu (R + \gamma P V - V) = D_\mu (V^* - \gamma PV^* + \gamma P V - V) = - D_\mu(I - \gamma P)(V - V^*).
\end{align}
We now define 
\begin{align}\label{Adef}
    A:= D_\mu (I - \gamma P).
\end{align}
While $ A $ is not necessarily symmetric, it is positive definite in the sense that $ x^T A x > 0$ for nonzero $ x$ since $ \frac{1}{2}(A+A^T)$ is positive definite \citep{Sutton}. This can be seen by showing that $ A $ is a non-singular M-matrix \citep{horn}. This then implies convergence of the ODE defined in (\ref{tabular_ODE}) to $ V^* $ regardless of initial conditions.

In practice, the state space may be too large to use a tabular approach or we may have a feature representation of the states that we think we can use to efficiently generalize. So, we can parametrize a value function $ V_\theta$ by $\theta \in \R^d$. Then, the ``semi-gradient'' TD(0) algorithm is
\begin{align}\label{TDfuncalg}
   \theta_{k+1} = \theta_k + \alpha_k \nabla V_{\theta_k}(s)(r(s,s') + \gamma V_{\theta_k}(s') - V_{\theta_k}(s)).
\end{align}
Now by an abuse of notation, define $ V:\R^d \to \R^n $ to be the function that maps parameters $\theta $ to value functions $ V(\theta)$ so that $ V(\theta)_s = V_\theta(s)$. Now, the associated ODE which we will study becomes:
\begin{align}\label{TDthetaDot}
   \dot \theta = - \nabla V(\theta)^T D_\mu(I - \gamma P)(V(\theta) - V^*) = - \nabla V(\theta)^T A(V(\theta) - V^*).
\end{align}
The method is called ``semi-gradient'' because it is meant to approximate gradient descent on the squared expected Bellman error, which is not feasible since it would require two independent samples of the next state to make each update \citep{sb}. This approximation is what results in the lack of convergence guarantees, as elaborated below.


\subsection{Convergence for linear functions and reversible environments}
There are two main regimes where the above dynamics are known to converge. The first is when $ V(\theta)$ is linear and the second when the MRP is reversible so that $ A $ is symmetric.

It is a classic result of \citet{Tsitsiklis} that under linear function approximation, where $ V(\theta) = \Phi \theta$ for some full rank feature matrix $ \Phi$, TD(0) converges to a unique fixed point (the result also applies to the more general TD($\lambda$)). The proof uses the fact that in the linear case, letting $ \theta^* = (\Phi^T A \Phi)^{-1}\Phi^T A V^*$, the dynamics (\ref{TDthetaDot}) become:
\begin{align}
   \dot \theta = - \Phi^TA\Phi(\theta - \theta^*).
\end{align}
The positive definiteness of $ A $ gives global convergence to $ \theta^*$. 

Recent work has extended this result to give finite sample bounds for linear TD \citep{bhandari}. Making such an extension in the nonlinear case is beyond the scope of this paper since even the simpler-to-analyze expected continuous dynamics are not understood for the nonlinear case. 

Some concurrent work has also extended linear convergence to particular kinds of neural networks in the lazy training regime where the networks behave like linear models \citep{lazyTD, cai_neuralTD}. The relationship to our work will be discussed in Section \ref{related_work}.

The other main regime where the dynamics converge is when $ P $ defines a reversible Markov chain which makes TD(0) gradient descent \citep{Ollivier}. In that case, $ A $ is symmetric so (\ref{TDthetaDot}) becomes:
\begin{align}
   \dot \theta = - \frac{1}{2} \nabla \Vert V(\theta) - V^* \Vert_A^2 
\end{align}
where $ A $ must be symmetric to define an inner product. Then for any function approximator this gradient flow will approach some local minima. Note that without this symmetry the TD dynamics in (\ref{TDthetaDot}) are provably not gradient descent of any objective since differentiating the dynamics we get a non-symmetric matrix which cannot be the Hessian of any objective function \citep{Maei_thesis}.

\subsection{An example of divergence}\label{div_spiral_sec}
To better understand the challenge of proving convergence, \citet{Tsitsiklis} provide an example of an MRP with zero reward everywhere and a nonlinear function approximator where both the parameters and estimated value function diverge under the expected TD(0) dynamics. We provide a description of this idea in Figure 1. 

\begin{figure}[h]
    \quad\ \ \  MRP\qquad\qquad\qquad\ \   Tabular\qquad\qquad\ \ \ \ \ \  Divergent Example\qquad\qquad\quad Reversible\\
    \centering
    \begin{tikzpicture}%
  [>=stealth,
   shorten >=1pt,
   node distance=1.5cm,
   on grid,
   auto,
   scale=0.7, every node/.style={scale=0.7},
   every state/.style={draw=black!60, fill=black!5, very thick}
  ]
\node[state] (mid)                  {1};
\node[state] (left) [below left=of mid] {3};
\node[state] (right) [below right=of mid] {2};

\path[->]
   (left) edge     node [above]                     {0.5} (right)
           edge[loop below]    node                      {0.5} (left)
   (mid)   
           edge[loop above]     node                      {0.5} (mid)
           edge            node  [above left]      {0.5} (left)
   (right) edge  node  [above right]                    {0.5} (mid)
            edge[loop below]  node                      {0.5} (right)
   ;
\end{tikzpicture}\quad
    \includegraphics[scale=0.23]{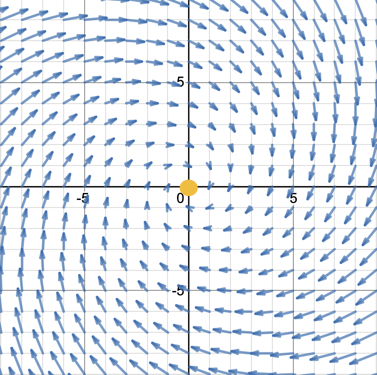}\qquad
    \includegraphics[scale=0.23]{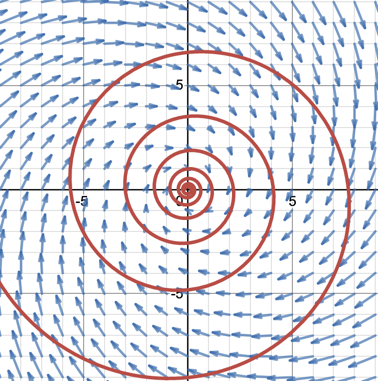}\qquad
    \includegraphics[scale=0.23]{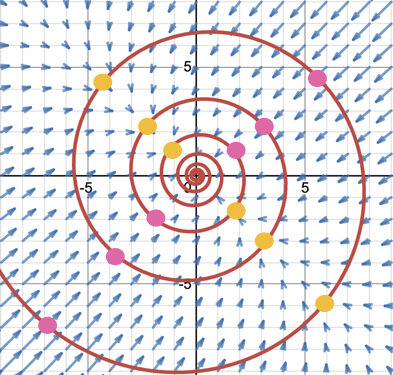}
    \caption{Each dimension of the vector field diagrams corresponds to the value function evaluated at a state. Here we see only a two-dimensional slice of the 3-dimensional function space corresponding to the 3-state MRP. There is no reward so $ V^* = 0$. The blue vector field represents the dynamics defined by the linear system $ \dot V = -A(V-V^*)$. The red spiral represents the one parameter family of functions defined for the divergent counterexample. Using an approximator constrains the dynamics to the red curve by projecting the ambient dynamics (blue arrows) onto the tangent space of the curve (note this projection is not explicitly illustrated in the diagrams). The yellow dots indicate stable fixed points and pink dots unstable fixed points. For the tabular approximator, global convergence to $ V^*$ is guaranteed since the dynamics are unconstrained. For the divergent example, projecting the vector field onto the tangent space of the curve causes the dynamics to spiral outwards regardless of initial conditions. However, if we use the same function approximator but make the environment reversible, the dynamics on the curve will converge to a local optimum.}
    \label{fig:my_label}
\end{figure}

\section{TD with homogeneous approximators including neural networks}

Our first result is that the expected dynamics of TD(0) are attracted to a neighborhood of the true value function in any irreducible, aperiodic environment when we use a smooth and homogeneous function approximator.

\begin{definition}[Homogeneous]\label{homogeneousDef}
$ f: \R^k \to \R^m$ is \emph{$h$-homogeneous} for $ h \in \R$ if $ f(x) = h \nabla f(x) x$. Note that by Euler's homogeneous function theorem, this is equivalent to $ f(\alpha x) = \alpha^h f(x)$ for all positive $ \alpha$. 
\end{definition}

\begin{remark}
    The ReLU activation as well as the square and several others are homogeneous. Moreover, neural networks of any depth with such activations remain homogeneous. This can be found in Lemma 2.1 of \citep{homogeneousLemma} and we include a proof in Appendix \ref{homog_app} for completeness. 
\end{remark}


Note that linear functions are also homogeneous and we will show that much like linear functions, the set of homogeneous functions works well with TD learning. At a high level, the intuition is that the image of a homogeneous mapping from parameters to functions is a set in function space who's geometry prevents the sort of divergence seen in the spiral example. When $ V $ is homogeneous, then the point $ V(\theta)$ in the space of functions must lie in the span of the columns of $ \nabla V(\theta)$ which define the tangent space to the manifold of functions. This prevents examples like the spiral where the tangent space is nearly orthogonal to $ V(\theta)$ for all $ V(\theta)$ in the manifold of functions. 
However, since homogeneous functions are a much more general class than linear functions, the following result is not quite as strong as the global convergence in the linear setting.


\begin{restatable}{theorem}{homogthm}
\label{homog_thm}
Let $ V:\R^d \to \R^n$ be an $h$-homogeneous  function such that $\Vert V(\theta)\Vert_\mu \leq C \Vert \theta \Vert^\ell$. Let $ B = \frac{\Vert (I - \gamma P) V^* \Vert_\mu }{1 - \gamma} = \frac{\Vert R \Vert_\mu}{1 - \gamma}$. Then, for any initial conditions $ \theta_0 $, if $ \theta $ follows the dynamics defined by (\ref{TDthetaDot}) we have 
\begin{align}
    \liminf_{t \to \infty} \Vert V(\theta) \Vert_\mu \leq B.
\end{align}
\end{restatable}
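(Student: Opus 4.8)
The plan is to track the scalar quantity $\frac{d}{dt}\Vert V(\theta)\Vert_\mu^2$ along the flow and show that whenever $\Vert V(\theta)\Vert_\mu > B$ this derivative is strictly negative, so the trajectory of $V(\theta)$ cannot stay bounded away from the ball of radius $B$ forever. The homogeneity hypothesis $V(\theta) = h\,\nabla V(\theta)\,\theta$ is the crucial tool: it lets us convert the dynamics on $\theta$ (equation~\ref{TDthetaDot}) into a usable statement about $V(\theta)$ itself, despite the fact that we never get a clean closed ODE for $V(\theta)$.

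First I would compute, using the chain rule and $\dot\theta = -\nabla V(\theta)^T A (V(\theta) - V^*)$,
\begin{align}
\frac{1}{2}\frac{d}{dt}\Vert V(\theta)\Vert_\mu^2 = \langle V(\theta), D_\mu \nabla V(\theta)\dot\theta\rangle = -\langle \nabla V(\theta)^T D_\mu V(\theta),\ \nabla V(\theta)^T A (V(\theta)-V^*)\rangle.
\end{align}
The trick is now to relate $\nabla V(\theta)^T D_\mu V(\theta)$ to $\theta$. Since $V$ is $h$-homogeneous we have $\nabla V(\theta)\theta = \frac{1}{h}V(\theta)$, so $\theta^T \nabla V(\theta)^T D_\mu V(\theta) = \frac{1}{h}\Vert V(\theta)\Vert_\mu^2$. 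Taking the inner product of $\dot\theta$ with $\theta$ and using this identity,
\begin{align}
\langle \theta, \dot\theta\rangle = -\theta^T \nabla V(\theta)^T A (V(\theta)-V^*) = -\frac{1}{h}\langle V(\theta),\ A(V(\theta) - V^*)\rangle,
\end{align}
where in the last step I again use $\nabla V(\theta)\theta = \frac1h V(\theta)$ to move the factor onto the left argument. Hence $\frac{d}{dt}\Vert\theta\Vert^2 = \frac{2}{h}\langle V(\theta), A(V^* - V(\theta))\rangle$ — but more usefully, I can differentiate $\Vert V(\theta)\Vert_\mu^2$ directly: from $V(\theta) = h\nabla V(\theta)\theta$ one also gets $\frac{d}{dt}V(\theta) = \nabla V(\theta)\dot\theta$, and combining with the homogeneity identity applied along the flow yields $\frac{1}{2}\frac{d}{dt}\Vert V(\theta)\Vert_\mu^2 = \langle V(\theta), A(V^* - V(\theta))\rangle$ after the same manipulation. (The growth bound $\Vert V(\theta)\Vert_\mu \le C\Vert\theta\Vert^\ell$ is what guarantees the flow does not blow up in finite time, so these computations are legitimate for all $t$.)

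Once I have $\frac{1}{2}\frac{d}{dt}\Vert V(\theta)\Vert_\mu^2 = \langle V(\theta),\ A(V^* - V(\theta))\rangle$, the rest is estimation. Write $A = D_\mu(I-\gamma P)$ and split: $\langle V(\theta), AV^*\rangle \le \Vert V(\theta)\Vert_\mu \Vert (I-\gamma P)V^*\Vert_\mu$ by Cauchy--Schwarz in the $\mu$-inner product (using that $A x = D_\mu (I-\gamma P)x$ so $\langle y, Ax\rangle_{\text{Euclidean}} = \langle y, (I-\gamma P)x\rangle_\mu$). For the other term I need a lower bound $\langle V(\theta), AV(\theta)\rangle \ge (1-\gamma)\Vert V(\theta)\Vert_\mu^2$; this follows because $\langle x, (I-\gamma P)x\rangle_\mu = \Vert x\Vert_\mu^2 - \gamma\langle x, Px\rangle_\mu \ge \Vert x\Vert_\mu^2 - \gamma\Vert x\Vert_\mu \Vert Px\Vert_\mu \ge (1-\gamma)\Vert x\Vert_\mu^2$, since $P$ is a stochastic matrix and hence a $\mu$-contraction, $\Vert Px\Vert_\mu \le \Vert x\Vert_\mu$. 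Putting these together,
\begin{align}
\frac{1}{2}\frac{d}{dt}\Vert V(\theta)\Vert_\mu^2 \le \Vert V(\theta)\Vert_\mu \Vert(I-\gamma P)V^*\Vert_\mu - (1-\gamma)\Vert V(\theta)\Vert_\mu^2 = (1-\gamma)\Vert V(\theta)\Vert_\mu\big(B - \Vert V(\theta)\Vert_\mu\big).
\end{align}
So on the region $\Vert V(\theta)\Vert_\mu > B$ the function $t\mapsto\Vert V(\theta(t))\Vert_\mu^2$ is strictly decreasing with derivative bounded away from zero on any sublevel exterior to the ball. A standard argument then shows $\liminf_{t\to\infty}\Vert V(\theta)\Vert_\mu \le B$: if it were $> B$, there would be $\epsilon>0$ and $T$ with $\Vert V(\theta(t))\Vert_\mu \ge B+\epsilon$ for all $t\ge T$, forcing $\frac{d}{dt}\Vert V(\theta)\Vert_\mu^2 \le -2(1-\gamma)(B+\epsilon)\epsilon < 0$, which drives $\Vert V(\theta)\Vert_\mu^2$ to $-\infty$, a contradiction.

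The main obstacle is the first part: correctly exploiting homogeneity to obtain the identity $\frac{1}{2}\frac{d}{dt}\Vert V(\theta)\Vert_\mu^2 = \langle V(\theta), A(V^*-V(\theta))\rangle$ (equivalently, that the ``$\theta$-dynamics projected through $\nabla V$'' act on $V(\theta)$ exactly like the tabular dynamics act on $V$, up to a positive scalar). The subtlety is that $\nabla V(\theta)$ appears twice — once from $\dot V(\theta) = \nabla V(\theta)\dot\theta$ and once inside $\dot\theta$ — and it is precisely the homogeneity relation $V(\theta) = h\nabla V(\theta)\theta$ that makes the composition $\nabla V(\theta)\nabla V(\theta)^T$ collapse against $V(\theta)$ in the right way; without it (the spiral example) this term can be nearly orthogonal to $V(\theta)$ and the argument fails. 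Everything after that identity is routine M-matrix/contraction estimates and a Grönwall-type comparison.
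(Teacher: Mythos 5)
There is a genuine gap at the heart of your argument. The identity you rely on,
$\tfrac12\tfrac{d}{dt}\Vert V(\theta)\Vert_\mu^2 = \langle V(\theta), A(V^*-V(\theta))\rangle$, is false. Writing it out,
$\tfrac12\tfrac{d}{dt}\Vert V(\theta)\Vert_\mu^2 = -V(\theta)^T D_\mu\,\nabla V(\theta)\nabla V(\theta)^T A\,(V(\theta)-V^*)$,
and the tangent kernel $\nabla V(\theta)\nabla V(\theta)^T$ does \emph{not} collapse: homogeneity gives you $\nabla V(\theta)\,\theta = \tfrac1h V(\theta)$, i.e.\ control of $\nabla V(\theta)$ applied to the vector $\theta$, which is exactly what makes $\langle\theta,\dot\theta\rangle = -\tfrac1h V(\theta)^T A(V(\theta)-V^*)$ work (you derive this correctly), but it says nothing about $V(\theta)^T D_\mu \nabla V(\theta)$ paired with $\nabla V(\theta)^T$. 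A one-dimensional sanity check already kills the identity: with $V(\theta)=\theta^2$, $A=a$, $\mu=1$, one computes $\tfrac12\tfrac{d}{dt}V^2 = -4aV^2(V-V^*)$, not $-aV(V-V^*)$; in higher dimensions the discrepancy is a non-scalar PSD matrix sandwiched into the quadratic form, and $D_\mu\,\nabla V\nabla V^T A$ need not even be positive definite, so the sign argument cannot be rescued. Indeed, if homogeneity alone forced $V(\theta)$ to follow the tabular dynamics up to a positive scalar, you would get global convergence to $V^*$, which is far stronger than the theorem and is exactly what the paper's separate well-conditioning assumption (Theorem 3) is needed for.

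The fix is to keep the Lyapunov-type quantity in parameter space: your own computation shows $\tfrac{d}{dt}\Vert\theta\Vert^2 = -\tfrac2h V(\theta)^T A(V(\theta)-V^*)$, and your Cauchy--Schwarz and $\mu$-contraction estimates (which are correct and match the paper's) then give $\tfrac{d}{dt}\Vert\theta\Vert^2 \le -c/h < 0$ whenever $\Vert V(\theta)\Vert_\mu \ge B+\epsilon$. This only tells you $\Vert\theta\Vert^2$ keeps shrinking while $V(\theta)$ stays outside the ball of radius $B+\epsilon$; to conclude that $V(\theta)$ must eventually enter that ball you need the H\"older bound $\Vert V(\theta)\Vert_\mu \le C\Vert\theta\Vert^\ell$ to convert ``$\Vert\theta\Vert$ has become small'' into ``$\Vert V(\theta)\Vert_\mu$ is small.'' In your write-up that hypothesis is relegated to a remark about finite-time blow-up, but it is actually a load-bearing step of the argument; this is precisely how the paper closes the proof.
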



The full proof is found in Appendix \ref{liminf_app}. The main technique is to use homogeneity to see that
\begin{align}
    \frac{d\Vert \theta \Vert^2}{dt} = \theta^T \dot \theta = - \theta^T \nabla V(\theta)^T A (V(\theta) - V^*) = - \frac{1}{h} V(\theta)^T A (V(\theta) - V^*).
\end{align}
This allows us to relate the norm of the value function to the dynamics in parameter space. We can also extend this result to prove that the limsup of the dynamics attains the same bound if we add a stronger assumption that the approximator is bi-Holder continuous. This result is in Appendix \ref{bihold_app}.

One way to think about the theorem is to say that using a homogeneous approximator does at least as well as a baseline given by the zero function. This is because $ B$ is a potentially tight bound on $ \Vert V^* - 0\Vert_\mu$, but cannot be known a priori since we do not know the expected rewards in advance. Using this intuition, we can change the parametrization of the function to include a stronger baseline. We can use a linear baseline since we understand how TD behaves with linear approximators. This gives a parametrization that resembles residual neural networks \citep{resnet} and which we will call residual-homogeneous when the network is also homogeneous.


\begin{definition}[Residual-homogeneous]\label{res_homogeneousDef}
A function $ f: \R^{k_1}\times \R^{k_2} \to \R^m$ is \emph{residual-homogeneous} if $ f(x_1, x_2) = \Phi x_1 + g(x_2)$ where $ \Phi \in \R^{m\times k_1}$ and $ g$ is $ h$-homogeneous.
\end{definition}

For this function class, we prove the following theorem, which extend the ideas from Theorem \ref{homog_thm}. 

\begin{restatable}{theorem}{reshomogthm}
\label{res_homog_thm}
Let $ V:\R^{d_1} \times \R^{d_2} \to \R^n$ be a residual-homogeneous function where $ \Phi $ is a full rank feature matrix and $\Vert V(\theta)\Vert_\mu \leq C \Vert \theta \Vert^\ell$. Let $ \Pi_\Phi$ be the projection onto the span of $ \Phi $ and let $ B_\Phi = \frac{\Vert (I - \gamma P) (V^* - \Pi_{\Phi}V^*) \Vert_\mu }{1 - \gamma}$. Then for any initial conditions $ \theta_0 $, if $ \theta $ follows the dynamics defined by (\ref{TDthetaDot}) we have 
\begin{align}
    \liminf_{t \to \infty} \Vert V(\theta) - \Pi_\Phi V^* \Vert_\mu \leq B_\Phi.
\end{align}
\end{restatable}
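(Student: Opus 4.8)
\emph{Proof proposal.} The plan is to run the energy argument of Theorem~\ref{homog_thm}, but with a Lyapunov function that treats the linear block of the parametrization by a shift and the homogeneous block through the homogeneity identity. Write $\theta = (\theta_1,\theta_2)$ so that $V(\theta) = \Phi\theta_1 + g(\theta_2)$ with $g$ being $h$-homogeneous; I assume $h>0$, which covers ReLU networks and the square activation. Since $\Phi$ is full rank, fix $\psi^*$ with $\Phi\psi^* = \Pi_\Phi V^*$, put $\phi_1 = \theta_1-\psi^*$, and abbreviate $u := V(\theta)-\Pi_\Phi V^* = \Phi\phi_1 + g(\theta_2)$ and $w := (I-\Pi_\Phi)V^*$, so that $V(\theta)-V^* = u-w$. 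Since $\nabla V(\theta) = [\,\Phi \mid \nabla g(\theta_2)\,]$, the ODE (\ref{TDthetaDot}) splits as $\dot\phi_1 = -\Phi^T A(u-w)$ and $\dot\theta_2 = -\nabla g(\theta_2)^T A(u-w)$. Define
\begin{align}
\Psi(t) := \tfrac12\Vert\phi_1\Vert^2 + \tfrac{h}{2}\Vert\theta_2\Vert^2 \;\geq\; 0 .
\end{align}
Differentiating and using the homogeneity identity $\theta_2^T\nabla g(\theta_2)^T = \tfrac1h g(\theta_2)^T$, the two blocks recombine into
\begin{align}
\dot\Psi = -(\Phi\phi_1)^T A(u-w) - g(\theta_2)^T A(u-w) = -u^T A(u-w) = -u^T A u + u^T A w .
\end{align}

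Next I estimate the two terms using the M-matrix structure of $A = D_\mu(I-\gamma P)$. Since $\mu$ is stationary, $P$ is a non-expansion for $\Vert\cdot\Vert_\mu$, which yields the standard bound $u^T A u \geq (1-\gamma)\Vert u\Vert_\mu^2$; and writing $u^T A w = u^T D_\mu (I-\gamma P)w$ and applying Cauchy--Schwarz in the $\mu$-weighted inner product gives $u^T A w \leq \Vert u\Vert_\mu \Vert (I-\gamma P)w\Vert_\mu = (1-\gamma)B_\Phi\Vert u\Vert_\mu$, the last equality being the definition of $B_\Phi$ (note $w = V^* - \Pi_\Phi V^*$). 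Hence
\begin{align}
\dot\Psi \;\leq\; -(1-\gamma)\Vert u\Vert_\mu\big(\Vert u\Vert_\mu - B_\Phi\big) \;\leq\; \tfrac{1-\gamma}{4}B_\Phi^2 ,
\end{align}
so $\Psi$ is strictly decreasing whenever $\Vert V(\theta)-\Pi_\Phi V^*\Vert_\mu = \Vert u\Vert_\mu > B_\Phi$, and in any case grows at most linearly in $t$; combined with the regularity/growth hypotheses on $V$ this excludes finite-time blow-up and gives a global solution.

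The $\liminf$ conclusion then follows by contradiction: if $\liminf_{t\to\infty}\Vert u(t)\Vert_\mu > B_\Phi$, choose $\delta>0$ and $T$ with $\Vert u(t)\Vert_\mu \geq B_\Phi+\delta$ for all $t\geq T$; then $\dot\Psi \leq -(1-\gamma)(B_\Phi+\delta)\delta < 0$ on $[T,\infty)$, forcing $\Psi\to-\infty$ and contradicting $\Psi\geq 0$. I expect the main obstacle to be pinning down the Lyapunov function: the homogeneous block must be weighted by exactly $h$ so that Euler's identity makes $\nabla V(\theta)^T(\cdot)$ reassemble as $u^T A(\cdot)$, and the linear block must be shifted by $\psi^*$ so the effective target becomes $\Pi_\Phi V^*$ rather than $V^*$; once that is set, the remaining estimates are the same positive-definiteness bounds already used for linear TD and in Theorem~\ref{homog_thm}. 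A secondary subtlety — which the authors flag in connection with the $\limsup$ refinement — is that here $\theta$ itself need not remain bounded (only $\Psi$ is controlled), so the statement is genuinely about the $\liminf$ of the value-function error.
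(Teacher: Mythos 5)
Your proof is correct and follows essentially the same route as the paper's: a quadratic Lyapunov function in parameter space centered at $(\psi^*,0)$, Euler's identity to reassemble its time derivative as $-u^\top A(u-w)$, and the Tsitsiklis--Van Roy positive-definiteness bounds on $A$. Your $h$-weighting of the homogeneous block handles the $h\neq 1$ case more carefully than the paper's unweighted $\Vert \theta-(\theta_1^*,0)\Vert^2$, and your direct contradiction with $\Psi\geq 0$ replaces (and slightly streamlines) the paper's set-based endgame, without needing the H\"older growth hypothesis for the $\liminf$ conclusion.
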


The proof is in Appendix \ref{resnet_app}. This allows us to bound the quality of the value function found by TD learning as compared to the linear baseline, but we may want to also bound the actual distance to the true value function. Using the above bound relative to the baseline in conjunction with the quality of the baseline, we can derive the following corollary, with proof in Appendix \ref{resnet_app}

\begin{restatable}{corollary}{rescor}
Under the same assumptions as Theorem \ref{res_homog_thm} we have
\begin{align}
    \liminf_{t \to \infty} \Vert V(\theta) -  V^* \Vert_\mu \leq (1 + \frac{1+\gamma}{1-\gamma})\Vert  V^* - \Pi_{\Phi}V^* \Vert_\mu.
\end{align}
\end{restatable}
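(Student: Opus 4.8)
The plan is to derive the corollary as a short deduction from Theorem~\ref{res_homog_thm}, by combining the relative error bound it provides with the triangle inequality, after first controlling the constant $B_\Phi$ by a crude operator-norm estimate. Write $w = V^* - \Pi_\Phi V^*$ for the linear approximation error of the true value function, so that the goal is to show $\liminf_{t\to\infty}\Vert V(\theta) - V^*\Vert_\mu \le \big(1 + \tfrac{1+\gamma}{1-\gamma}\big)\Vert w\Vert_\mu$.

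First I would apply the triangle inequality in the $\mu$-weighted norm: for every time $t$, $\Vert V(\theta(t)) - V^* \Vert_\mu \le \Vert V(\theta(t)) - \Pi_\Phi V^* \Vert_\mu + \Vert w \Vert_\mu$. Since $\Vert w \Vert_\mu$ is a fixed finite constant, taking $\liminf_{t\to\infty}$ of both sides (using that $\liminf$ is monotone under a pointwise inequality and that an additive constant pulls out of a $\liminf$), and then invoking Theorem~\ref{res_homog_thm}, gives
\begin{align}
\liminf_{t\to\infty} \Vert V(\theta) - V^* \Vert_\mu \;\le\; \liminf_{t\to\infty} \Vert V(\theta) - \Pi_\Phi V^* \Vert_\mu + \Vert w \Vert_\mu \;\le\; B_\Phi + \Vert w \Vert_\mu .
\end{align}

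Second I would bound $B_\Phi = \Vert (I - \gamma P) w \Vert_\mu / (1-\gamma)$. By the triangle inequality $\Vert (I - \gamma P) w \Vert_\mu \le \Vert w \Vert_\mu + \gamma \Vert P w \Vert_\mu$, so it suffices to show that $P$ is a contraction on $L^2(\mu)$, i.e. $\Vert P w \Vert_\mu \le \Vert w \Vert_\mu$. This is the one standard fact the proof rests on: applying Jensen's inequality to the convex map $x \mapsto x^2$ against each row $P(\cdot\,|\,s)$ (a probability distribution) and then using stationarity $\mu^T P = \mu^T$,
\begin{align}
\Vert P w \Vert_\mu^2 = \sum_s \mu(s)\Big(\sum_{s'} P(s'|s) w(s')\Big)^2 \le \sum_s \mu(s)\sum_{s'} P(s'|s) w(s')^2 = \sum_{s'} \mu(s') w(s')^2 = \Vert w \Vert_\mu^2 .
\end{align}
Hence $\Vert (I - \gamma P) w \Vert_\mu \le (1+\gamma)\Vert w \Vert_\mu$, so $B_\Phi \le \tfrac{1+\gamma}{1-\gamma}\Vert w \Vert_\mu$. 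Substituting this into the previous display yields $\liminf_{t\to\infty}\Vert V(\theta) - V^*\Vert_\mu \le \big(1 + \tfrac{1+\gamma}{1-\gamma}\big)\Vert w \Vert_\mu$, which is exactly the stated bound.

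I expect essentially no real obstacle here; the content of the corollary is entirely in Theorem~\ref{res_homog_thm}, and the remainder is bookkeeping. The only step needing a little care is the contraction estimate $\Vert P\Vert_\mu \le 1$, which is where stationarity of $\mu$ is used and which could alternatively just be cited; one should also verify that the $\liminf$ manipulation is legitimate, namely that pulling the fixed constant $\Vert w\Vert_\mu$ out of the $\liminf$ and combining it with the Theorem~\ref{res_homog_thm} bound on $\liminf_{t\to\infty}\Vert V(\theta) - \Pi_\Phi V^*\Vert_\mu$ is valid (it is, since $\Vert w\Vert_\mu$ is a finite number independent of $t$).
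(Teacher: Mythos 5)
Your proposal is correct and follows essentially the same route as the paper's proof: triangle inequality to reduce to $B_\Phi + \Vert V^* - \Pi_\Phi V^*\Vert_\mu$, then bounding $B_\Phi$ via $\Vert I - \gamma P\Vert_\mu \le 1+\gamma$. The only difference is that you explicitly re-derive the non-expansion $\Vert P w\Vert_\mu \le \Vert w\Vert_\mu$ (Jensen plus stationarity, i.e.\ Lemma~1 of Tsitsiklis--Van Roy) where the paper simply asserts the operator-norm bound, which is a welcome extra bit of care rather than a deviation.
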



\begin{remark}
For linear TD, we get that $ \lim_{t\to \infty}\Vert \Phi \theta - V^*\Vert_\mu \leq \frac{\Vert V^* - \Pi_{\Phi}V^* \Vert_\mu}{1-\gamma}$ at the fixed point $ \theta^*$ \citep{Tsitsiklis}. So, our result shows that in terms of the worst case bound, residual-homogeneous approximators perform similarly to linear functions, especially for large $ \gamma$.  
\end{remark}

These are the first results that characterize the behavior of TD for a broad class of nonlinear functions including neural networks regardless of initialization under the same assumptions on the environment as used in the analysis of linear TD. Our current results 
resemble those established in the context of non-convex optimisation using residual networks \citep{shamir2018resnets}, also obtained under weak assumptions. One direction for future work would be to extend the results from the liminf to limsup or even to show that the limit exists. Another direction for future work is to try to strengthen the assumptions and leverage structure in $ V^* $ itself to reduce the space of possible solutions and be able to make stronger conclusions.



\section{The interaction between approximator and environment}

In the previous section we considered a class of approximators for which we can provide guarantees in all irreducible, aperiodic environments. Now we consider how the function approximator interacts with the environment during TD learning. Recall that prior work has shown that in reversible environments, TD learning is performing gradient descent \citep{Ollivier}. Our insight is that strict reversibility is not necessary to make similar guarantees if we also have more information about the function approximator.
As seen in the spiral example, the geometric problem with TD arises from the combination of ``spinning'' linear dynamics from an asymmetric $ A $ matrix (i.e. a non-reversible environment) with a poorly conditioned function approximator that ``kills'' some directions of the update towards the true value function in function space.
In this section we will formalize this notion by showing how we can trade off environment reversibility and approximator conditioning and still guarantee convergence. First we need a way to quantify how reversible an environment is and offer the following definition. 

\begin{definition}[Reversibility coefficient]\label{reversibility}
Let $ S_A := 1/2(A + A^T)$ and $R_A := 1/2(A - A^T)$ be the symmetric and anti-symmetric parts of $ A$, as defined in equation (\ref{Adef}). Then the reversibility coefficient $ \rho(\mathcal{M})$ is
\begin{align}
    \rho(\mathcal{M}) := \min_{V \in \R^n \backslash \{0\}} \frac{\Vert S_A V\Vert^2 + \Vert AV\Vert^2}{\Vert R_A V\Vert^2} 
\end{align}
\end{definition}

Note that when the environment is reversible this coefficient is infinite since in that case $ A $ is symmetric and $ R_A $ is zero. The antisymmetric part $ R_A $ captures the spinning behavior of the linear dynamical system in function space (as in the spiral example). At a high level, more spinning means a less reversible environment and larger $ R_A $ which lowers the reversibility coefficient. 

Now we need a compatible way to quantify the effect of the function approximator. To do this, we have to examine the matrix $ \nabla V(\theta) \nabla V(\theta)^T$. This matrix shapes the dynamics of TD and in the case of neural networks under particular assumptions it is known as the neural tangent kernel \citep{jacot2018neural}. The condition number of this matrix gives us one way to quantify how much the approximator prefers updates along the directions in function space corresponding to maximal eigenvalues over those corresponding to minimal eigenvalues. This gives us a way to quantify how well-behaved the function approximator is and allows us to prove the following theorem.


\begin{restatable}{theorem}{overparam}\label{overparam}
Let $ \kappa(M)$ be the condition number of a matrix $ M$. Assume that for all $ \theta$, 
\begin{align}\label{overparam_assumption}
    \kappa(\nabla V(\theta)\nabla V(\theta)^T) < \rho(\mathcal{M}).
\end{align}
Then if $ \theta $ evolves according to (\ref{TDthetaDot}) we have that for all $ \theta$ 
\begin{align}
    \frac{d\Vert V(\theta) - V^*\Vert_{S_A}^2}{dt} < 0
\end{align}
where $ S_A := 1/2(A + A^T)$. Thus, $ V(\theta) \to V^*$ regardless of initial conditions.
\end{restatable}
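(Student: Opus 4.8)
The plan is to exhibit $L(\theta) := \|V(\theta)-V^*\|_{S_A}^2$ as a strict Lyapunov function. Write $e := V(\theta)-V^*$ and $K := \nabla V(\theta)\nabla V(\theta)^T$ (the symmetric PSD matrix appearing in the dynamics). Differentiating along (\ref{TDthetaDot}) gives $\dot V(\theta) = \nabla V(\theta)\dot\theta = -K A e$, hence $\dot L = 2 e^T S_A \dot V(\theta) = -2\, e^T S_A K A e$. So the whole statement reduces to proving $e^T S_A K A e > 0$ whenever $e\neq 0$, and then running a standard Lyapunov/LaSalle-type argument.

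The algebraic heart of the argument — and the step I expect to be the least obvious — is to expand $e^T S_A K A e$ in the two natural ways, using $A = S_A + R_A$ on one side and $S_A = A - R_A$ on the other:
\[
e^T S_A K A e = (S_A e)^T K (S_A e) + (S_A e)^T K (R_A e), \qquad e^T S_A K A e = (A e)^T K (A e) - (R_A e)^T K (A e).
\]
Averaging these two expressions and using $S_A e - A e = -R_A e$ together with the symmetry of $K$, the cross terms collapse to $-(R_A e)^T K (R_A e)$, yielding the clean identity
\[
e^T S_A K A e = \tfrac12\big[(S_A e)^T K (S_A e) + (A e)^T K (A e) - (R_A e)^T K (R_A e)\big].
\]

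From here it is eigenvalue bookkeeping. The assumption $\kappa(K) < \rho(\mathcal{M})$ forces $\rho(\mathcal{M})<\infty$ meaning $\kappa(K)<\infty$, so $K\succ 0$; let $\underline\lambda,\overline\lambda$ be its extreme eigenvalues. Bounding the first two quadratic forms below by $\underline\lambda\|\cdot\|^2$ and the third above by $\overline\lambda\|R_A e\|^2$, and then invoking the definition of $\rho(\mathcal{M})$ in the form $\|S_A e\|^2 + \|A e\|^2 \ge \rho(\mathcal{M})\,\|R_A e\|^2$, gives
\[
e^T S_A K A e \;\ge\; \tfrac12\big(\underline\lambda\,\rho(\mathcal{M}) - \overline\lambda\big)\|R_A e\|^2 \;=\; \tfrac{\underline\lambda}{2}\big(\rho(\mathcal{M}) - \kappa(K)\big)\|R_A e\|^2 \;\ge\; 0,
\]
which is strictly positive when $R_A e \neq 0$. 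In the remaining case $R_A e = 0$ with $e\neq 0$, the first expansion reads $e^T S_A K A e = (S_A e)^T K (S_A e) \ge \underline\lambda\|S_A e\|^2 > 0$, since $S_A\succ 0$ implies $S_A e\neq 0$. Hence $\dot L < 0$ off $V^*$.

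To upgrade strict decrease to convergence $V(\theta)\to V^*$: $L$ is nonincreasing and nonnegative, so it converges; combining the displayed lower bound with $\|S_A e\|^2+\|Ae\|^2-\kappa(K)\|R_Ae\|^2 \ge (1-\kappa(K)/\rho(\mathcal{M}))(\|S_Ae\|^2+\|Ae\|^2)$ and $\|S_A e\|^2 \ge \lambda_{\min}(S_A)^2\|e\|^2 \gtrsim L$ produces $\dot L \le -c(\theta) L$ with an explicit $c(\theta)>0$ depending only on $\lambda_{\min}(K(\theta))$ and $\kappa(K(\theta))$. The one genuinely delicate point is that $S_A$-sublevel sets need not be compact in $\theta$-space, so to conclude cleanly one either uses that the conditioning of $K(\theta)$ is uniform along the trajectory (yielding a uniform $c>0$ and exponential convergence) or argues from $\int_0^\infty(-\dot L)\,dt<\infty$ that $V(\theta(t))$ cannot remain bounded away from $V^*$. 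I would present the identity and the strict-decrease computation in full and dispatch the convergence wrap-up with whichever of these the paper's standing assumptions permit.
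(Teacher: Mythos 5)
Your proof is correct and follows essentially the same route as the paper: the same Lyapunov function $\Vert V(\theta)-V^*\Vert_{S_A}^2$, the same three-term identity $\dot L = -(S_Ae)^TK(S_Ae)-(Ae)^TK(Ae)+(R_Ae)^TK(R_Ae)$ (which the paper derives via the polarization identity rather than your averaging of two expansions), and the same $\lambda_{\min}/\lambda_{\max}$ bound against the reversibility coefficient. If anything you are slightly more careful than the paper, which glosses over both the $R_Ae=0$ case and the final step from $\dot L<0$ to convergence.
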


The proof is relatively simple and proceeds by using the chain rule to write out the time derivative of the Lyapunov function and applying the Courant-Fischer-Weyl min-max theorem along with the assumption to get the result. 
The full details can be found in Appendix \ref{overparam_app}. 

This result provides strong global convergence guarantees, albeit under fairly strong assumptions. It nevertheless provides intuition about how the environment interacts with the function approximation. 
We show that we can use a nonlinear approximator that generalizes across states by using gradient information so long as the condition number of the tangent kernel of the approximator is bounded by the reversibility coefficient.
Note that for the condition number of the kernel to be finite, the Jacobian must be full rank which means that the function approximator has more parameters than the number of states. Such an approximator has more parameters than a tabular approximator, but can be nonlinear and generalize across states using the structure of the input representation. 
This opens a few directions for future work to formalize the relationship between the environment and approximator in the regime when there are less parameters than states or when the state space is infinite. It may be fruitful to connect this to the literature from supervised learning on over-parametrized neural networks (see for example \citet{oymak2019towards} and references therein), especially in the case of value estimation from a finite dataset (i.e. a replay buffer). 
Another direction would be to leverage structure in $ V^*$ and the input representation so that the approximator is effectively over-parametrized and similar arguments can be made, but it is not clear how to formalize such assumptions.

\subsection{Extension to k-step returns}
We now consider how the analysis strategy presented above applies to a classical variation on the TD learning algorithm. We find that $ k$-step returns have better convergence guarantees by increasing the reversibility of the effective environment. This is not completely surprising since in the limit $ k\to \infty$ we recover gradient descent in the $ \mu $-norm with the Monte Carlo algorithm for value estimation. However, we show that our sufficient condition to guarantee convergence in the well-conditioned regime weakens exponentially with $ k$. In Appendix \ref{variants_app} we show that with $ k$ step returns the dynamics of the algorithm become
\begin{align}
    \dot \theta = - \nabla V(\theta)^T D_\mu (I - (\gamma P)^{k})(V(\theta) - V^*).
\end{align}
We can define a notion of effective reversibility that scales exponentially with $ k$ such that we recover the same type of convergence as Theorem \ref{overparam} whenever
\begin{align}
    \kappa(\nabla V(\theta) \nabla V(\theta)^T)^{1/2} < \frac{\mu_{min}(1- \gamma^{k})}{\mu_{max}} (\gamma\lambda_2(P))^{-k }
\end{align}
where $ \lambda_2(P)$ is the second largest eigenvalue of $ P$, which is strictly less than 1 under our irreducible and aperiodic assumption. This result shows how using $ k$-step returns to increase the effective reversibility of the environment can lead to better convergence properties. See Appendix \ref{variants_app} for a more precise statement of the result and its proof.

\subsection{Numerical experiment on the divergent example}
We perform a small set of experiments on the divergent spiral example from Section \ref{div_spiral_sec} which support our conclusions about reversibility and $k$-step returns. 
We integrate the expected dynamics ODEs in two settings, one where we introduce reversibility into the environment and the other where we increase the value of $ k$ in the algorithm. The function approximator is always the spiral approximator from the example.
We can introduce reversibility by adding reverse connections to the environment with probability $ \delta \in \{0,0.1, 0.2, 0.23\}$ as shown in Figure \ref{spiral_experiment}. This effectively reduces the spinning of the linear dynamical system in function space defined by the Bellman operator. We find that increasing reversibility eventually leads to convergence.
We also validate the result that increasing $ k$ will lead to convergence by increasing the effective reversibility without changing the MRP. 
Note that the spiral example is outside the assumptions of the theory in this section since the function is not well-conditioned, but we wanted to show that the connection between reversibility and convergence may extend beyond the well-conditioned setting. 

\begin{figure}[ht]
    \centering
    \begin{tikzpicture}%
  [>=stealth,
  shorten >=1pt,
  node distance=2cm,
  on grid,
  auto,
  scale=0.7, every node/.style={scale=0.7},
  every state/.style={draw=black!60, fill=black!5, very thick}
  ]
\node[state] (mid)                  {1};
\node[state] (left) [below left=of mid] {3};
\node[state] (right) [below right=of mid] {2};

\path[->]
  (left) edge     node [above]                     {$\delta$} (right)
          edge[bend left]      node            {$0.5 - \delta$} (mid)
          edge[loop below]    node                      {0.5} (left)
  (mid)   edge[bend left]  node                      {$0.5 - \delta$} (right)
          edge[loop above]     node                      {0.5} (mid)
          edge[bend left]            node  [above]      {$\delta$} (left)
  (right) edge[bend left]  node  [above]                    {$\delta$} (mid)
            edge[loop below]  node                      {0.5} (right)
            edge[bend left]  node                      {$0.5 - \delta$} (left)
  ;
\end{tikzpicture}\quad
    \includegraphics[scale=0.16]{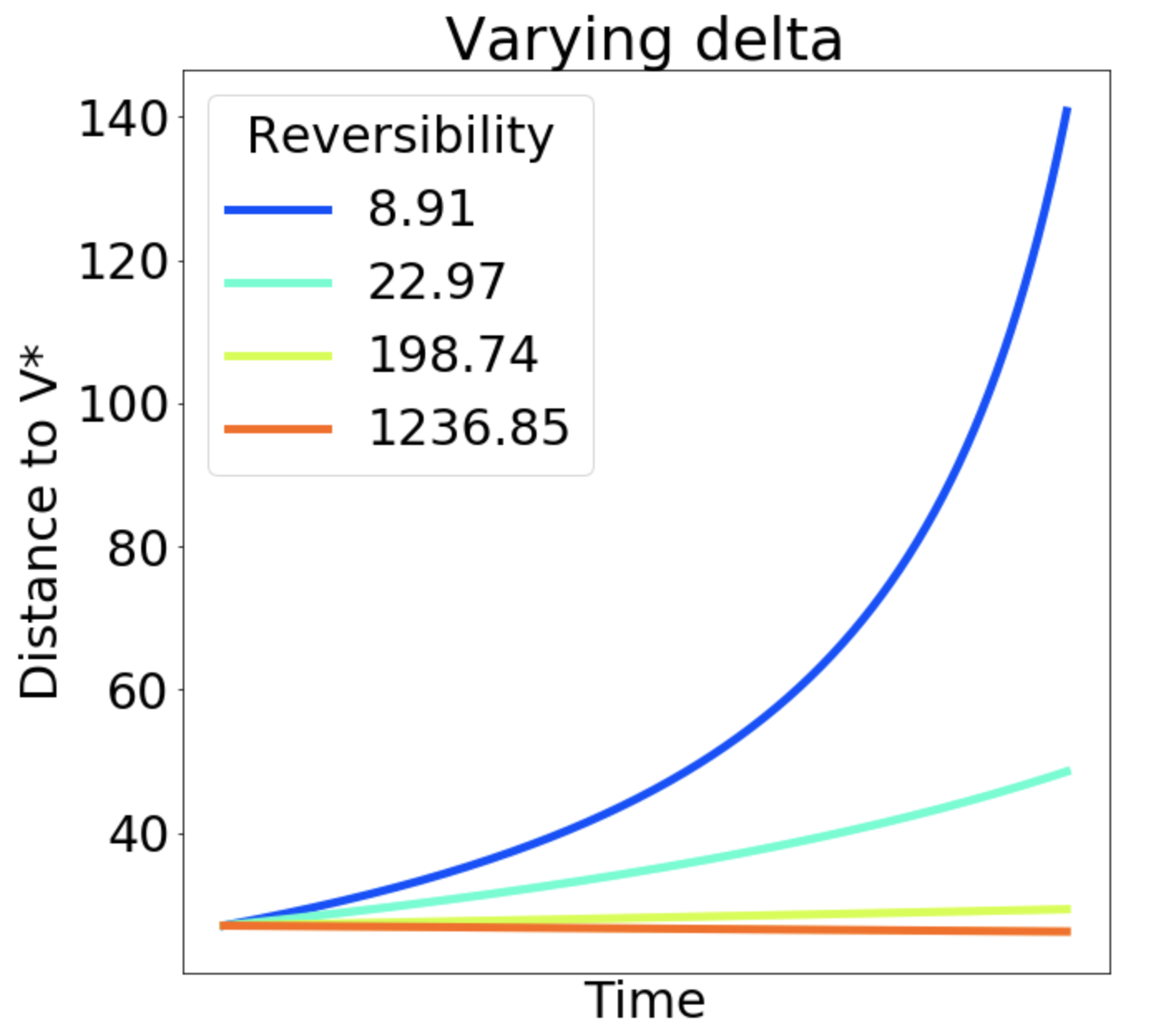}\quad\includegraphics[scale=0.16]{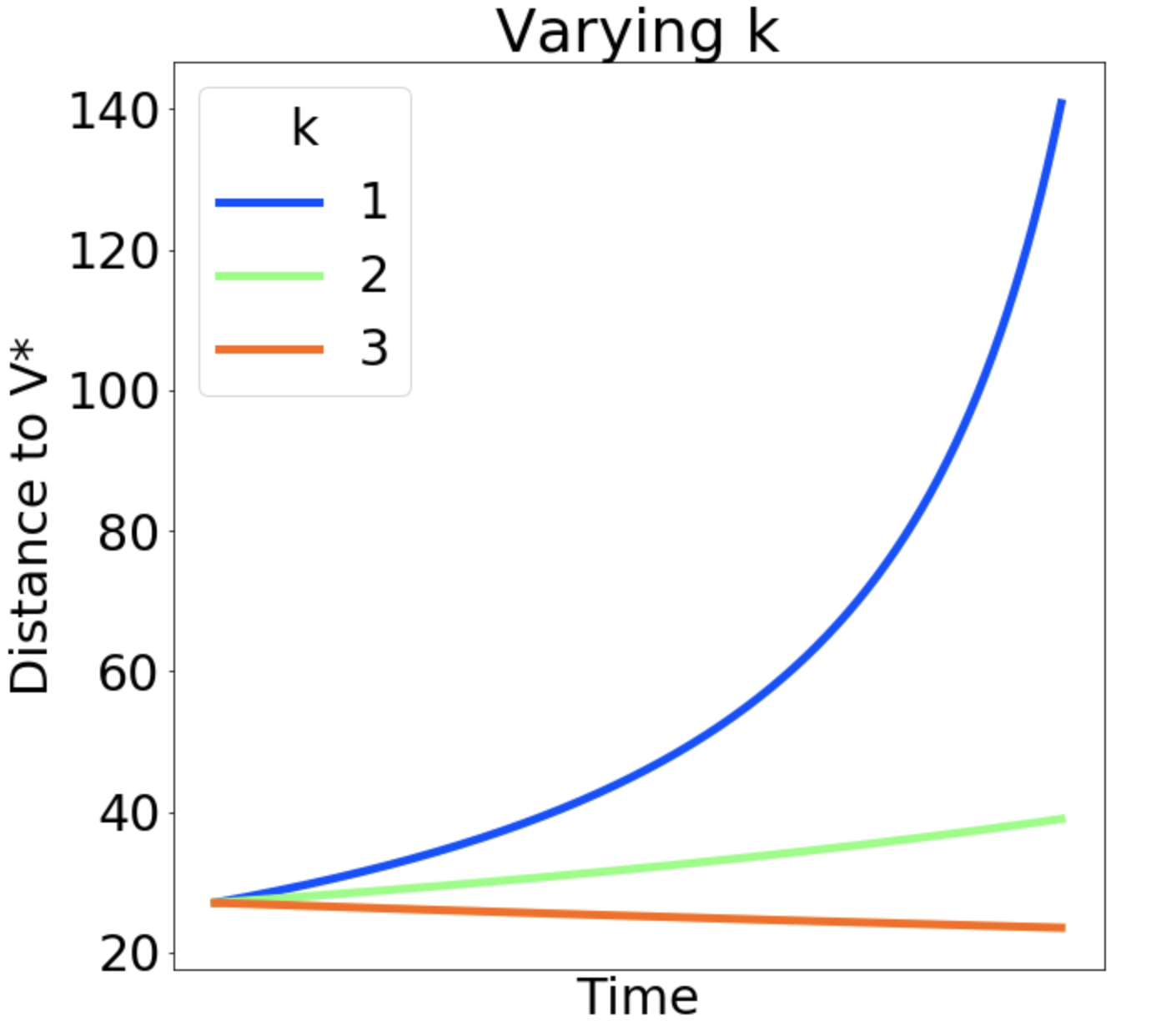}
    \caption{Left: the Markov chain used for the experiments labeled with the transition probabilities. Center: the spiral divergence example in progressively more reversible environments. A stronger reverse connection makes the environment more reversible and eventually causes convergence. Right: The impact of using $ k$-step returns. We use $ \delta = 0$ for all values of $ k$ and get convergence for $ k=3$.}
    \label{spiral_experiment}
\end{figure}

\section{A generalized divergent example}\label{general_spiral}

To motivate the necessity of assumptions similar to the ones that we have made we can look again to the spiral example of \citep{Tsitsiklis}. Here we generalize this example to arbitrary number of states for most non-reversible MRPs. Our construction allows for approximators with arbitrary number of parameters, but restricts them to have rank deficient tangent kernels to mimic the spiral in a 2-D subspace of function space. The construction can be found in Appendix \ref{diverge_app}, and the result can be described formally as follows.

\begin{restatable}{proposition}{generalcounter}
 If the MRP is not reversible such that $ A = D_\mu(I - \gamma P)$ has at least one non-real eigenvalue, then there exists a function approximator $ V$ such that TD learning will diverge. That is, for any initial parameters $ \theta_0$, as $ t \to \infty$ we have $ \Vert V(\theta) - V^* \Vert \to \infty$. Moreover, $ \nabla V(\theta)$ can have rank up to $ n-1$, where $ n $ is the number of states, for all $ \theta$.
\end{restatable}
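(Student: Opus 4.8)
The plan is to lift the planar spiral counterexample of \citet{Tsitsiklis} into $\R^n$ along the $A$-invariant plane coming from the non-real eigenvalue, and then to ``pad'' that one-parameter spiral with linear directions that raise the Jacobian rank to $n-1$ without disturbing its dynamics. A non-real eigenvalue $a\pm ib$ of $A$ (with $b\neq 0$) yields a two-dimensional real $A$-invariant subspace $W\subseteq\R^n$ on which $A$ has exactly these eigenvalues; positive definiteness of $A$ (i.e.\ $x^TAx>0$ for $x\neq 0$) forces $a>0$. Fix an orthonormal basis $(\xi_1,\xi_2)$ of $W$, swapping the two vectors if needed to fix the sense of rotation.

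Next I would take $V^*$ arbitrary (for instance zero reward, as in the original example) and define $V:\R\to\R^n$ by $V(\theta)=V^*+\rho(\theta)\big(\cos\theta\,\xi_1+\sin\theta\,\xi_2\big)$ for a positive increasing radial profile $\rho$. Writing $u(\theta)=\cos\theta\,\xi_1+\sin\theta\,\xi_2$ and $u^\perp(\theta)=-\sin\theta\,\xi_1+\cos\theta\,\xi_2=u'(\theta)$ and differentiating, the dynamics (\ref{TDthetaDot}) collapse to the autonomous scalar ODE
\begin{align}
  \dot\theta = -\,\nabla V(\theta)^T A\big(V(\theta)-V^*\big) = -\big(\rho(\theta)\rho'(\theta)\,\alpha(\theta)+\rho(\theta)^2\,\beta(\theta)\big),
\end{align}
where $\alpha(\theta)=u(\theta)^TAu(\theta)$ is bounded and $\beta(\theta)=u^\perp(\theta)^TAu(\theta)$ has \emph{constant sign and is bounded away from zero} --- this is where the hypothesis enters, since having a non-real eigenvalue is exactly the statement that the antisymmetric part of $A|_W$ dominates the eigenvalue spread of its symmetric part. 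Orienting $W$ so that $\beta<0$ and choosing $\rho$ increasing slowly enough (so that $\rho'/\rho$ stays below the critical ratio, keeping $\dot\theta>0$ everywhere, while the $\rho^2\beta$ term governs the growth), we get $\theta(t)\to\infty$ from every initialization, hence $\Vert V(\theta)-V^*\Vert=\rho(\theta)\,\Vert u(\theta)\Vert\to\infty$. Taking $\rho(\theta)=e^{\epsilon\theta}$ recovers the original example, with blow-up in finite time; taking $\rho(\theta)=\log\theta$ for large $\theta$ gives infinite escape time, so divergence literally holds as $t\to\infty$.

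To inflate the Jacobian rank to $n-1$, set $Z:=A^{-1}(W^\perp)$, of dimension $n-2$. If $0\neq w\in W\cap Z$, then $Aw\in W^\perp$ gives $w^TAw=0$, contradicting positive definiteness, so $\R^n=W\oplus Z$. Fix a basis $\zeta_2,\dots,\zeta_{n-1}$ of $Z$ and put $\bar V:\R^{n-1}\to\R^n$, $\bar V(\theta_1,\dots,\theta_{n-1})=V(\theta_1)+\sum_{i=2}^{n-1}\theta_i\zeta_i$. Its Jacobian has columns $V'(\theta_1)\in W\setminus\{0\}$ together with $\zeta_2,\dots,\zeta_{n-1}\in Z$, which are linearly independent because $W\cap Z=\{0\}$; hence $\operatorname{rank}\nabla\bar V(\theta)=n-1$ for all $\theta$. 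Since $A\zeta_i\in W^\perp$ while $V'(\theta_1)\in W$, every cross term $V'(\theta_1)^TA\zeta_i$ vanishes, so the $\theta_1$-coordinate of (\ref{TDthetaDot}) obeys exactly the scalar ODE above; thus $\theta_1(t)\to\infty$ irrespective of the other coordinates and of the initialization, and the splitting $\R^n=W\oplus Z$ supplies a constant $c>0$ with $\Vert\bar V(\theta)-V^*\Vert\geq c\,\rho(\theta_1)\to\infty$. This $\bar V$ is the required approximator.

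The main obstacle is this last step: the padding directions must be numerous enough to push the Jacobian rank up to $n-1$, yet must not feed back into the spiral parameter, or else the divergence could be destroyed. The single choice $Z=A^{-1}(W^\perp)$ handles both at once --- it meets $W$ trivially (giving both the rank count and the direct-sum lower bound) and its image under $A$ lies in $W^\perp$ precisely so as to decouple $\dot\theta_1$. A lesser technical point, if one insists on divergence as $t\to\infty$ rather than in finite time, is the choice of radial profile: any $\rho$ with $\rho\to\infty$, $\rho'/\rho$ below the critical ratio throughout (so $\dot\theta>0$ everywhere), and $\int^\infty d\theta/\rho(\theta)^2=\infty$ will serve.
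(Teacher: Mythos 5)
Your proof is correct, and it takes a genuinely different route from the paper's. The paper builds the spiral as a matrix exponential $V(\theta)=e^{(Q+\epsilon I)\theta}V_0$, where $Q$ is assembled from the real and imaginary parts of the complex eigenvector so that $V^TQ^TAV=-(a^2+b^2)(x^2+y^2)$ on the invariant plane; positivity of $\dot\theta$ comes from that algebraic identity, and the rank is inflated by padding with an arbitrary map into $E^\perp$. You instead write the spiral in polar form $\rho(\theta)u(\theta)$ in an orthonormal basis of the invariant plane and isolate the crux as the sign of $\beta(\theta)=u^\perp(\theta)^TAu(\theta)$. Your key claim checks out: in an orthonormal basis one computes $\beta(\theta)=-r+\tfrac{1}{2}\bigl(\lambda_1(S)-\lambda_2(S)\bigr)\sin(2\theta+\phi)$, where $S$ and $r$ come from the symmetric and antisymmetric parts of the $2\times 2$ matrix of $A|_W$, and the eigenvalues of $S+R$ are non-real exactly when $4r^2>(\lambda_1(S)-\lambda_2(S))^2$ --- so constant sign of $\beta$ is literally equivalent to the hypothesis, which is a cleaner identification of where non-reversibility enters than the paper gives. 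Your version buys two further things. First, the explicit radial profile makes $\Vert V(\theta)-V^*\Vert=\rho(\theta)\to\infty$ immediate and lets you tune the escape time; the paper's construction instead needs $\epsilon$ to sit in a window, large enough that $e^{(Q+\epsilon I)\theta}$ actually expands on $E$ (the eigenvalues of $Q|_E$ have negative real part, $-a$ in the orthonormal case) yet small enough to keep $\dot\theta>0$, and the nonemptiness of that window (roughly $a<\epsilon<a+b^2/a$) is not verified there. Second, your padding subspace $Z=A^{-1}(W^\perp)$ is exactly the right choice: decoupling $\dot\theta_1$ requires $V'(\theta_1)^TA\zeta_i=0$, which you get for free since $A\zeta_i\in W^\perp$ while $V'(\theta_1)\in W$, whereas padding into $E^\perp$ only decouples if $E$ is $A^T$-invariant, which need not hold for a non-normal $A$; your $W\cap Z=\{0\}$ argument via positive definiteness also cleanly delivers both the rank count and the lower bound on $\Vert\bar V-V^*\Vert$. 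The only place the writeup should be expanded is the short $2\times 2$ computation behind the constant-sign claim for $\beta$, since the entire argument rests on it.
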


The construction of the approximator in the counterexample is somewhat pathological, but any convergence proofs have to make assumptions to rule out these divergent examples. In this work we avoid these by using either smooth homogeneous functions or by using well-conditioned functions in nearly-reversible environments. While there may be other assumptions that yield convergence, they must also account for this class of divergent examples.

\section{Related work}\label{related_work}

\subsection{Connections to work in the lazy training regime}\label{lazy_training_subsection}

Concurrent work \citep{lazyTD} has proven convergence of expected TD in the nonlinear, non-reversible setting in the so-called ``lazy training'' regime, in which nonlinear models (including neural networks) with particular parametrization and scaling behave as linear models, with a kernel given by the linear approximation of the function at initialization. Whereas this kernel captures some structure from the function approximation, the lazy training regime does not account for feature selection, since parameters are confined in a small neighborhood around their initialization \citep{chizat:hal-01945578}. Another result in a similar direction is from concurrent work \citep{cai_neuralTD} which considers two-layer networks (one hidden layer) in the large width regime where only the first layer is trained. They show that this particular type of function with fixed output layer is nearly linear and derive global convergence in the limit of large width with an additional assumption on the regularity of the stationary distribution.
In contrast with these works, our results account for feature selection with more general nonlinear functions. Our homogeneous results hold for a broad class of approximators much closer to those used in practice and our well-conditioned results hold for general nonlinear parametrization and provide useful intuition about the relationship between approximator and environment.

\subsection{Connections to work on fitted value iteration}\label{fitted_value_subsection}

Another line of work provides convergence rates for fitted value iteration or fitted Q iteration under the assumption of small optimization error at each iteration \citep{munos07, munos08, yang}. These papers give bounds that depend on the maximum difference between the function returned by the Bellman operator applied to the current iterate and its projection into the space of representable functions (which they call the inherent Bellman residual). This assumption means that a priori the function class has geometry amenable to the MDP being evaluated. 
Here we do not rely on any assumptions about successful optimization or an assumption that the projection of the tabular TD update into the space of representable functions is uniformly small. Instead we find scenarios where we can guarantee that the difference between the tabular update and projection cannot be too far so that the optimization procedure succeeds.

\subsection{Alternative value estimation algorithms}
There are several papers that introduce new algorithms inspired by TD learning but modified so as to have provable convergence with nonlinear approximators.
To our knowledge, all of them use either a two timescale argument where the optimization procedure at the faster timescale views the slower timescale as fixed \citep{borkar, borkar2008} or they attempt to optimize a different objective function \citep{baird}. Most of the algorithms have not seen widespread use, potentially because these modifications make optimization more difficult or decrease the quality of solutions. More specifically, \citet{baird} present residual algorithms, which attempt to optimize a different objective which avoids double sampling, but has incorrect value functions as solutions \citep{sb}.
\citet{Maei} present GTD2/TDC which uses two timescales to perform gradient descent on the norm of the TD(0) dynamics projected onto the image of the nonlinear approximator in function space and thus has the same fixed points as TD(0). 
More recently, \citet{SBEED} and \citet{ttn} present two timescale algorithms which are provably convergent.
\citet{bilevel} characterize target networks, which have seen widespread use \citep{dqn}, as a two timescale algorithm.
Finally, while they do not provide guaranteed convergence with nonlinear functions, \citet{achiam} present an algorithm that uses a similar observation to ours about the connection between TD-learning with nonlinear functions and the neural tangent kernel.
Their algorithm then estimates a preconditioning matrix that serves a similar function as the two-timescale argument.  
In contrast to this line of work on algorithmic modifications, our work is a first step towards characterizing the behavior of nonlinear TD without two timescales or a modified objective.


\subsection{Empirical work}

Recent empirical work by \citet{Fu}
empirically investigates the interaction between nonlinear function approximators (neural networks) with Q-learning, which is a semi-gradient algorithm that is similar to TD. 
They find that divergence is rare and that more expressive approximators reduce approximation error and reduce the chances of divergence. Since these more expressive approximators are more likely to be well-conditioned, this gives reason to believe it may be possible to extend our convergence results from the well-conditioned setting for very expressive approximators to more realistic approximators. 


\section{Discussion}

We have considered the expected continuous dynamics of the TD algorithm for on policy value estimation from the perspective of the interaction of the geometry of the function approximator and environment. Using this perspective we derived two positive results and one negative result. First, we showed attraction to a compact set when homogeneous approximators like ReLU networks. The worst case solution in this set is comparable to the worst case linear TD solution for a particular parametrization inspired by ResNets. Second, we showed global convergence when the environment is more reversible than the approximator is poorly-conditioned. Finally, we provided a generalized counterexample to motivate the assumptions necessary to rule out bad interactions between approximator and environment.

There are several possible directions for future work. First, while our results extend both the linear and reversible convergence regimes, they do not close the gap between the two. One direction for future work is thus to provide a unifying analysis that would neatly connect all of the convergent regimes. Next, it may be possible to find a more precise notion of the well-conditioning necessary to get local convergence rather than global convergence which would allow extension to more realistic settings. It may be possible to leverage assumptions about regularity of the true value function so that the function class is effectively well-conditioned. Another direction is that, while it was beyond the scope of this paper, it would be instructive to extend the results to finite sample results as has recently been done for linear TD. It would also be interesting to extend the results to off-policy and Q-learning settings, but likely would require stronger assumptions. Finally, we would like to motivate future work by noting that the ultimate goal of this line of work is to put TD on the same solid footing as optimization in supervised learning where we can characterize easy problems (by convexity), can guarantee convergence even in hard problems (to local minima), and have some notions of how to make optimization easier (like over-parametrization). Here we have taken a step towards this kind of analysis, but the precise characterization of what makes a problem easy and whether and where TD converges on hard problems remain incomplete.

\subsubsection*{Acknowledgments}
We would like to thank Yann Ollivier for helping to inspire this project and for sharing some of his notes with us. We also thank the lab mates, especially Will Whitney, Aaron Zweig, and Min Jae Song, who provided useful discussions and feedback. 

This work was partially supported by the Alfred P. Sloan Foundation, NSF RI-1816753, NSF CAREER CIF 1845360, and Samsung Electronics.

\bibliographystyle{iclr2020_conference}
\bibliography{rl}

\newpage
\appendix


\section{TD with homogeneous approximators}

\subsection{Proof of Theorem 1}\label{liminf_app}

\homogthm*

\begin{proof}
Applying the chain rule, (\ref{TDthetaDot}), and the homogeneity assumption we get that:
\begin{align*}
    \frac{d\Vert \theta \Vert^2}{dt} = \theta^T \dot \theta = - \theta^T \nabla V(\theta)^T A (V(\theta) - V^*) = - \frac{1}{h} V(\theta)^T A (V(\theta) - V^*).
\end{align*}
For any $ \epsilon > 0$, whenever $ B+ \epsilon = \frac{\Vert (I - \gamma P) V^* \Vert_\mu }{1 - \gamma} + \epsilon < \Vert V(\theta) \Vert_\mu$ we have that
\begin{align*}
    |V(\theta)^TAV^*| &\leq \Vert V(\theta)\Vert_\mu \Vert (I - \gamma P)V^*\Vert\mu  \\
    &< (1 - \gamma)\Vert V(\theta) \Vert_\mu^2 - \epsilon(1 -\gamma)\Vert V(\theta) \Vert_\mu \\
    &\leq \Vert V(\theta) \Vert_\mu^2  - \gamma \Vert V(\theta) \Vert_\mu^2 - \epsilon(1 -\gamma)(B + \epsilon) \\
    &\leq V(\theta)^TD_\mu V(\theta) - \gamma V(\theta)^TD_\mu P V(\theta) - c \\
    &=  V(\theta)^TAV(\theta) - c
\end{align*}
for $ c = \epsilon(1-\gamma)(B+\epsilon) > 0$. The last inequality follows from an application of Lemma 1 of \citep{Tsitsiklis} that $ \Vert PV \Vert_\mu \leq \Vert V \Vert_\mu $  along with Cauchy-Schwarz to show that:
\begin{align*}
     V(\theta)^T D_\mu P V(\theta) = V(\theta)^T D_\mu^{1/2} D_\mu^{1/2} P V(\theta) \leq \Vert V(\theta) \Vert_\mu \Vert PV(\theta) \Vert_\mu \leq  \Vert V(\theta) \Vert_\mu^2.
\end{align*}

Putting this all together we get that whenever $ B+ \epsilon < \Vert V(\theta) \Vert_\mu$ we have that
\begin{align*}
    \frac{d\Vert \theta \Vert^2}{dt} \leq -\frac{1}{h} ( V(\theta)^TAV(\theta) - |V(\theta)^TAV^*|) < -\frac{c}{h} < 0.
\end{align*}

Put another way, we can define $ U = \{\theta: \frac{d \Vert \theta\Vert^2}{dt} \geq -c/h\} $ and $ W = \{V: \Vert V \Vert_\mu \leq B + \epsilon\}$. By the above, we have that $ V(U) \subseteq W$ where $ V(U)$ is the image of $ U $ under $V$, i.e. $ \{V: \exists \theta \in U \ s.t.\ V = V(\theta)\}$.
Now define $ \mathcal{O} = \{\theta: C\Vert \theta\Vert^\ell < B + \epsilon\}$ which contains an open ball around the origin. By our Holder continuity assumption, we know that $ V(\mathcal{O}) \subseteq W$. 

Putting it all together, if $ \theta \not\in U$ at time $ t$, the dynamics of $ \Vert \theta\Vert ^2$ are contracting faster than $ c/h$ so that there exists a finite $ T > t $ such that at time $ T $ the parameters $\theta(T) $ will either be in $ \mathcal{O}$ or $ U$, either way implying $ V(\theta) $ will be in $ W$. Since the above reasoning held for any $ \epsilon > 0$, taking the liminf we can take $ \epsilon \to 0$ which yields the result. 
\end{proof}

\subsection{Bi-Holder homogeneous stability}\label{bihold_app}
\begin{proposition}
Let $ V:\R^d \to \R^n$ be $h$-homogeneous. Moreover, assume that $ c \Vert \theta \Vert^s \leq \Vert V(\theta)\Vert_\mu \leq C \Vert \theta \Vert^r$ for some $ c,s,r,C>0$. Let $ B = \frac{\Vert (I - \gamma P) V^* \Vert_\mu }{1 - \gamma} = \frac{\Vert R \Vert_\mu}{1 - \gamma}$. Then, for any initial conditions $ \theta_0 $, if $ \theta $ follows the dynamics defined by (\ref{TDthetaDot}) we have 
\begin{align}
    \lim\sup_{t\to \infty}\Vert V(\theta) \Vert_\mu \leq C(B/c)^{r/s}.
\end{align}
\end{proposition}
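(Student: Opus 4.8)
The plan is to bootstrap from Theorem~\ref{homog_thm}, which already gives $\liminf_{t\to\infty}\Vert V(\theta)\Vert_\mu \le B$. The key observation is that the computation in the proof of Theorem~\ref{homog_thm} actually shows something stronger than a liminf statement: it shows that $\frac{d}{dt}\Vert\theta\Vert^2 < 0$ whenever $\Vert V(\theta)\Vert_\mu > B+\epsilon$, i.e. the set $\{\theta : \Vert V(\theta)\Vert_\mu \le B+\epsilon\}$ is \emph{forward invariant up to} a controlled region: once $\Vert V(\theta)\Vert_\mu$ exceeds the threshold, $\Vert\theta\Vert$ strictly decreases. So first I would re-examine that argument and extract the precise invariance: using the lower bound $c\Vert\theta\Vert^s \le \Vert V(\theta)\Vert_\mu$, the condition $\Vert V(\theta)\Vert_\mu > B+\epsilon$ forces $\Vert\theta\Vert > ((B+\epsilon)/c)^{1/s}$, and on this region $\Vert\theta\Vert$ is strictly decreasing. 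Hence $\Vert\theta(t)\Vert$ can never exceed $\max\{\Vert\theta_0\Vert, ((B+\epsilon)/c)^{1/s}\}$ for long; more carefully, I would argue that $\limsup_{t\to\infty}\Vert\theta(t)\Vert \le ((B+\epsilon)/c)^{1/s}$ for every $\epsilon>0$, hence $\limsup_{t\to\infty}\Vert\theta(t)\Vert \le (B/c)^{1/s}$.

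The second step is to translate a bound on $\limsup\Vert\theta\Vert$ into a bound on $\limsup\Vert V(\theta)\Vert_\mu$ using the \emph{upper} Hölder bound $\Vert V(\theta)\Vert_\mu \le C\Vert\theta\Vert^r$. This gives directly
\begin{align*}
    \limsup_{t\to\infty}\Vert V(\theta)\Vert_\mu \le C\Big(\limsup_{t\to\infty}\Vert\theta(t)\Vert\Big)^r \le C\,(B/c)^{r/s},
\end{align*}
which is exactly the claimed inequality. (Here I am using that $x\mapsto Cx^r$ is continuous and monotone increasing on $[0,\infty)$, so it commutes with $\limsup$ in the direction of the inequality we need.)

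For the first step I need to be slightly careful about the logic of the ``escape in finite time'' argument, since the decreasing region $\{\Vert\theta\Vert > ((B+\epsilon)/c)^{1/s}\}$ need not coincide exactly with $\{\Vert V(\theta)\Vert_\mu > B+\epsilon\}$; the former contains the latter, which is the direction I want for controlling $\Vert\theta\Vert$ (whenever $\Vert V(\theta)\Vert_\mu > B+\epsilon$ we are in the decreasing region, but we may also be in the decreasing region with small $\Vert V(\theta)\Vert_\mu$, which only helps). The cleanest formulation: let $r_\epsilon = ((B+\epsilon)/c)^{1/s}$; I claim that if $\Vert\theta(t_0)\Vert > r_\epsilon$ and $\Vert\theta(t)\Vert > r_\epsilon$ on an interval, then on that interval $\Vert V(\theta)\Vert_\mu > B+\epsilon$ and so $\frac{d}{dt}\Vert\theta\Vert^2 < -c/h < 0$ (reusing the constant $c$ from Theorem~\ref{homog_thm}'s proof — I should rename to avoid the clash with the Hölder constant $c$ in this proposition, say $c' = \epsilon(1-\gamma)(B+\epsilon)$). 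Strict uniform decrease means $\Vert\theta\Vert^2$ must exit the region $\{\Vert\theta\Vert>r_\epsilon\}$ in finite time, and once it is in $\{\Vert\theta\Vert\le r_\epsilon\}$ the worst that can happen is that it re-enters, but it can never climb strictly above $r_\epsilon$ and stay there. A short continuity/contradiction argument then yields $\limsup_{t\to\infty}\Vert\theta(t)\Vert \le r_\epsilon$.

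The main obstacle is exactly this last bookkeeping: turning ``strictly decreasing outside a ball'' into ``$\limsup$ of the norm is at most the ball radius'' rigorously, handling the possibility of the trajectory oscillating in and out of the ball. This is standard ODE Lyapunov-type reasoning (the function $\Vert\theta\Vert^2$ acts as a Lyapunov function with a sublevel-set trapping region), but it does require care that the uniform strictness of the decrease $-c'/h$ does not degrade as $\epsilon\to 0$ within a \emph{fixed} $\epsilon$ — which it does not, since $c'$ depends only on $\epsilon$ and is positive. Everything else (the chain-rule identity, the Cauchy–Schwarz estimate, Lemma~1 of \citet{Tsitsiklis}) is inherited verbatim from the proof of Theorem~\ref{homog_thm}.
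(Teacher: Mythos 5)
Your proposal is correct and follows essentially the same route as the paper's proof: use the lower H\"older bound to turn the strict decrease of $\Vert\theta\Vert^2$ outside a ball into a trapping region for $\theta$, then apply the upper H\"older bound to convert the resulting bound on $\limsup\Vert\theta\Vert$ into the stated bound on $\limsup\Vert V(\theta)\Vert_\mu$. Your version is in fact slightly more careful than the paper's, which drops the $\epsilon$ and asserts entry into the trapping set from mere strict negativity of the derivative, whereas you retain the uniform decrease rate $-c'/h$ needed to justify finite-time entry and invariance before letting $\epsilon\to 0$.
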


\begin{proof}
Using the same reasoning as above without the additional $ \epsilon$, we get that whenever $ B = \frac{\Vert (I - \gamma P) V^* \Vert_\mu }{1 - \gamma} < \Vert V(\theta) \Vert_\mu $, we have
\begin{align*}
    |V(\theta)^TAV^*| &\leq V(\theta)^TAV(\theta).
\end{align*}
So, whenever $ B < \Vert V(\theta)\Vert_\mu $, we have that $ \frac{d\Vert \theta \Vert^2}{dt} < 0$. 


Define $ \mathcal{O} = \{\theta: c\Vert \theta\Vert^s \leq B\}$. Thus, by the lower Holder bound, $ \frac{d\Vert \theta \Vert^2}{dt} < 0$ for all $ \theta \not\in \mathcal{O}$.
So, we have that as $ t\to\infty$, $ \theta \in \mathcal{O}$.
And, by the upper Holder bound, if $ \theta \in \mathcal{O}$ then 
\begin{align*}
    \Vert V(\theta) \Vert_\mu \leq \max_{\theta' \in \mathcal{O}}C\Vert \theta'\Vert^r \leq C (B/c)^{r/s}
\end{align*}
Since, $ \theta \in \mathcal{O}$ in the limit, we get the desired result. 
\end{proof}

\subsection{Proof of Theorem 2 and Corollary 1}\label{resnet_app}

\reshomogthm*

\begin{proof}
Let $ f $ be homogeneous and
\begin{align*}
    V(\theta) = V(\theta_1, \theta_2) = \Phi \theta_1 + f(\theta_2)
\end{align*}
Let $ \theta_1^* = \Pi_\Phi V^*$ be the best linear predictor. In fact, the proof would go through for any $ \theta_1^*$, only the bound gets worse for worse choice of $ \theta_1^*$. 

Note that
\begin{align*}
    \frac{d \Vert \theta - (\theta_1^*, 0)\Vert^2}{dt} &= -(\theta - (\theta_1^*, 0))^\top DV(\theta)^\top A (V(\theta) - V^*) \\&= -(\Phi (\theta_1 - \theta_1^*) + f(\theta_2 - 0))^\top A (V(\theta) - V^*) \\ &= -(V(\theta) - \Phi \theta_1^*)^\top A (V(\theta) - \Phi \theta_1^*) - (V(\theta) - \Phi \theta_1^*)^\top A (\Phi \theta_1^* - V^*).
    \end{align*}
    
The residual parametrization is necessary here to have the linear part of $ DV(\theta) $ independent from $ \theta$ so that we can turn the difference of $ \theta_1 - \theta_1^*$ into a difference between functions.

As in the proof of Theorem \ref{homog_thm}, for any $ \epsilon > 0 $ we get
that whenever $ B_\Phi + \epsilon <  \Vert V(\theta) - \Phi\theta_1^* \Vert_\mu^2 $,
\begin{align*}
    |(V(\theta) - \Phi\theta_1^*)^T A(\Phi\theta_1^* - V^*)| &\leq \Vert V(\theta) - \Phi \theta_1^*\Vert_\mu \Vert (I - \gamma P)(\Phi \theta_1^* - V^*)\Vert\mu  \\
    &< (1 - \gamma)\Vert V(\theta) - \Phi\theta_1^* \Vert_\mu^2 - \epsilon(1 -\gamma)\Vert V(\theta) - \Phi\theta_1^* \Vert_\mu^2 \\
    &\leq \Vert V(\theta) - \Phi\theta_1^* \Vert_\mu^2  - \gamma \Vert V(\theta) - \Phi\theta_1^* \Vert_\mu^2 - \epsilon(1 -\gamma)(B_\Phi + \epsilon) \\
    &\leq \Vert V(\theta) - \Phi\theta_1^* \Vert_\mu^2 - \gamma (V(\theta) - \Phi\theta_1^*)^TD_\mu P (V(\theta) - \Phi\theta_1^*) - c \\
    &=  (V(\theta) - \Phi\theta_1^*)^T A(V(\theta) - \Phi\theta_1^*) - c.
\end{align*}

As a consequence we can conclude that whenever $ B_\Phi + \epsilon  < \Vert V(\theta) - \Phi\theta_1^* \Vert_\mu^2$
\begin{align*}
    \frac{d \Vert \theta - (\theta_1^*, 0)\Vert^2}{dt} < -c/h.
\end{align*}



To conclude the proof, we can define $ U = \{\theta: \frac{d \Vert \theta - (\theta_1^*, 0)\Vert^2}{dt} \geq -c/h\} $ and $ W = \{V: \Vert V - \Phi \theta_1^*\Vert_\mu \leq B + \epsilon\}$. Then we have that $ V(U) \subseteq W$.
Now define $ \mathcal{O} = \{(\theta_1^*, 0) + \theta: C\Vert \theta \Vert^\ell < B + \epsilon\}$ which contains an open ball around $(\theta_1^*, 0)$. By our Holder continuity assumption and the residual parametrization of $ V$, we know that $ V(\mathcal{O}) \subseteq W$. 

Putting it all together, if $ \theta \not\in U$ at time $ t$, the dynamics of $ \Vert \theta\Vert ^2$ are contracting faster than $ c/h$ so that there exists a finite $ T > t $ such that at time $ T $ the parameters $\theta(T) $ will either be in $ \mathcal{O}$ or $ U$, either way implying $ V(\theta) $ will be in $ W$. Since the above reasoning held for any $ \epsilon > 0$, taking the liminf we can take $ \epsilon \to 0$ which yields the result.
\end{proof}

\rescor*

\begin{proof}
Applying the triangle inequality and the result of the Theorem, we get that
\begin{align*}
     \liminf_{t \to \infty} \Vert V(\theta) -  V^* \Vert_\mu &\leq \liminf_{t \to \infty} (\Vert V(\theta) - \Pi_\Phi V^* \Vert_\mu + \Vert \Pi_\Phi V^* -  V^* \Vert_\mu) \\
     &= \liminf_{t \to \infty}\Vert V(\theta) - \Pi_\Phi V^* \Vert_\mu + \Vert \Pi_\Phi V^* -  V^* \Vert_\mu \\
     &\leq B_\Phi + \Vert \Pi_\Phi V^* -  V^* \Vert_\mu
\end{align*}
Then, we note that $ \Vert I - \gamma P\Vert \leq 1 + \gamma$ since $ P $ is a stochastic matrix which lets us bound $ B_\Phi $ by $ \frac{1+\gamma}{1-\gamma} \Vert \Pi_\Phi V^* -  V^* \Vert_\mu$. Putting this together with the above yields the result.  
\end{proof}

\section{Convergence in the well-conditioned setting}\label{overparam_app}

\subsection{Proof of the theorem}
\overparam*

\begin{proof}
To simplify notation, we use $ S $ for $ S_A $ and $ R $ for $ R_A $. We define $ L: \R^d \to \R $ which will be the Lyapunov function for the dynamical system as
\begin{align*}
    L(\theta) = \Vert V(\theta) - V^* \Vert_{S}^2
\end{align*}
By applying the chain rule and the polarization identity to (\ref{TDthetaDot}) we have
\begin{align*}
    \dot L(\theta) &= -\bigg\langle \nabla V(\theta)^T(A + A^T)(V(\theta) - V^*) , \nabla V(\theta)^T A (V(\theta) - V^*) \bigg\rangle\\
    &= -\Vert \nabla V(\theta)^T A (V(\theta) - V^*)\Vert^2 - \bigg\langle \nabla V(\theta)^T A^T(V(\theta) - V^*) , \nabla V(\theta)^T A (V(\theta) - V^*) \bigg\rangle \\
    &= - \Vert \nabla V(\theta)^T A (V(\theta) - V^*)\Vert^2 \\ &\qquad- \frac{1}{4}\bigg(\Vert \nabla V(\theta)^T (A +A^T) (V(\theta) - V^*)\Vert^2 - \Vert \nabla V(\theta)^T (A - A^T) (V(\theta) - V^*)\Vert^2 \bigg)\\
    &= - \Vert \nabla V(\theta)^T A (V(\theta) - V^*)\Vert^2 - \Vert \nabla V(\theta)^T S (V(\theta) - V^*)\Vert^2 + \Vert \nabla V(\theta)^T R (V(\theta) - V^*)\Vert^2
\end{align*}
where $ R := 1/2 (A - A^T)$. 
So using the assumption and then the min-max theorem,
\begin{align*}
    \kappa(\nabla V(\theta) \nabla V(\theta)^T) &< \rho(\mathcal{M})\\
    \lambda_{\max}(\nabla V(\theta) \nabla V(\theta)^T) \Vert R (V(\theta) - V^*)\Vert^2 &< \lambda_{\min}(\nabla V(\theta) \nabla V(\theta)^T)\bigg(\Vert A (V(\theta) - V^*)\Vert^2 +  \Vert S (V(\theta) - V^*)\Vert^2\bigg)\\
    \Vert \nabla V(\theta)^T R (V(\theta) - V^*)\Vert^2 &< \Vert \nabla V(\theta)^T A (V(\theta) - V^*)\Vert^2 + \Vert \nabla V(\theta)^T S (V(\theta) - V^*)\Vert^2 
\end{align*}
where $ \lambda_{\max}, \lambda_{\min}$ are the maximal and minimal eigenvalues. 
Combining, we conclude that $\dot L(\theta) < 0$ and the result follows.
\end{proof}

\subsection{Calculating the reversibility coefficient}

The reversibility coefficient can be seen as the minimizer of a generalized Rayleigh quotient \citep{horn}. Such a quotient for Hermitian matrices $ B, C$ is defined as $ R_{B, C}(u) = \frac{u^T B u}{u^T C u}$. And when $ C $ is full rank, we know that this is maximized by the maximal eigenvalue of $ C^{-1}B$. In our case, this gives us a way to calculate the reversibility coefficient as
\begin{equation}
  \rho(\mathcal{M}) = \left(\max_{V \in \R^n \backslash \{0\}} \frac{\Vert R_A V\Vert^2}{\Vert S_A V\Vert^2 + \Vert AV\Vert^2}\right)^{-1}  = \bigg(\lambda_{\max}([S_A^TS_A + A^TA]^{-1}R_A^TR_A)\bigg)^{-1}.
\end{equation}

\section{k-step returns in the well-conditioned setting}\label{variants_app}


\subsection{Expected dynamics derivation}
The $ k$-step return variant of TD learning replaces $ r(s,s') + \gamma V_{\theta}(s') $ in the original algorithm (\ref{TDfuncalg}) by $ \sum_{t = 0}^{k-1} \gamma^tr(s_t, s_{t+1}) + \gamma^{k}V_\theta(s_{k})$.
To get the continuous matrix version of the dynamics we replace $ R $ with $ V^* - \gamma P V^* $  in the following expression to get a telescoping series so that
\begin{align*}
    V - \sum_{t=0}^{k-1} (\gamma P)^t R + (\gamma P)^{k} V = V - V^* - (\gamma P)^{k} (V - V^*) = (I - (\gamma P)^{k})(V - V^*).
\end{align*}
Thus, we can define the TD($k$) dynamics by
    \begin{align}\label{kstep}
        \dot \theta = - \nabla V(\theta)^T D_\mu (I - (\gamma P)^{k})(V(\theta) - V^*).
    \end{align}
    
\subsection{Effective reversibility}

To draw the comparison to the TD(0) algorithm analyzed in the paper, we define the matrix
\begin{align*}
    A_k:= D_\mu(I - (\gamma P)^{k}).
\end{align*}
Now we take symmetric and asymmetric parts $ S_k = \frac{1}{2}(A_k + A_k^T)$ and $ R_k = \frac{1}{2}(A_k - A_k^T)$. Then we can define reversibility in this effective environment.

\begin{definition}[Effective reversibility coefficient]
\begin{align}
    \rho_k(\mathcal{M}) := \min_{V \in \R^n \backslash \{0\}} \frac{\Vert S_k V\Vert^2 + \Vert A_kV\Vert^2}{\Vert R_k V\Vert^2} .
\end{align}
\end{definition}

\subsection{Convergence Proposition}
\begin{proposition}
Let $ \lambda_2(P)$ be the modulus of the second largest eigenvalue of $ P$, which is strictly less than 1 by our irreducible, aperiodic assumption. Then 
\begin{align}
    \frac{\mu_{min}(1- \gamma^{k})}{\mu_{max}} (\gamma\lambda_2(P))^{-k } \leq \rho_k(\mathcal{M})^{1/2}
\end{align}
And if $\theta$ follows the ODE defined by (\ref{kstep}), then $\frac{d\Vert V(\theta) - V^*\Vert_{S_k}^2}{dt} < 0 $ whenever
\begin{align}
    \kappa(\nabla V(\theta) \nabla V(\theta)^T)^{1/2} < \frac{\mu_{min}(1- \gamma^{k})}{\mu_{max}} (\gamma\lambda_2(P))^{-k }.
\end{align}
As a consequence, as long as $\kappa(\nabla V(\theta) \nabla V(\theta)^T)$ is finite there exists $ k$ sufficiently large to guarantee convergence to $ V^* $ regardless of initial conditions. 
\end{proposition}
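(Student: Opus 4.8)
The plan is to extract the second and third assertions almost for free from the first, and to concentrate the work on the bound $\rho_k(\mathcal{M})^{1/2} \ge \frac{\mu_{min}(1-\gamma^k)}{\mu_{max}}(\gamma\lambda_2(P))^{-k}$. The key observation for the easy part is that the $k$-step dynamics (\ref{kstep}) are exactly (\ref{TDthetaDot}) with the matrix $A = D_\mu(I-\gamma P)$ of (\ref{Adef}) replaced by the effective matrix $A_k = D_\mu(I-(\gamma P)^k)$, and that $S_k = \tfrac12(A_k + A_k^T)$ is still positive definite, since $x^T A_k x = \Vert x\Vert_\mu^2 - \gamma^k x^T D_\mu P^k x \ge (1-\gamma^k)\Vert x\Vert_\mu^2 > 0$ by $\Vert PV\Vert_\mu \le \Vert V\Vert_\mu$ (Lemma~1 of \citet{Tsitsiklis}). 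Therefore the proof of Theorem~\ref{overparam} transfers verbatim once one checks this, with $A, S_A, R_A, \rho(\mathcal{M})$ replaced by $A_k, S_k, R_k, \rho_k(\mathcal{M})$: whenever $\kappa(\nabla V(\theta)\nabla V(\theta)^T) < \rho_k(\mathcal{M})$ one gets $\frac{d}{dt}\Vert V(\theta)-V^*\Vert_{S_k}^2 < 0$, and hence, by the Lyapunov conclusion of that theorem, $V(\theta)\to V^*$. Granting the bound on $\rho_k(\mathcal{M})^{1/2}$, the hypothesis of the second assertion forces $\kappa(\nabla V(\theta)\nabla V(\theta)^T) < \rho_k(\mathcal{M})$, which is the second assertion; and since $\gamma\lambda_2(P) < 1$ (because $\gamma < 1$ and $|\lambda_2(P)| \le 1$) the right-hand side $\frac{\mu_{min}(1-\gamma^k)}{\mu_{max}}(\gamma\lambda_2(P))^{-k}$ tends to $+\infty$ with $k$, so for $k$ large it exceeds the (uniform in $\theta$) value of $\kappa(\nabla V(\theta)\nabla V(\theta)^T)^{1/2}$ and the second assertion yields the third.

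For the remaining bound, note $\rho_k(\mathcal{M}) \ge \min_{V\ne 0}\Vert A_k V\Vert^2/\Vert R_k V\Vert^2$, so it is enough to lower bound $\Vert A_k V\Vert$ and upper bound $\Vert R_k V\Vert$. For the former, $I-(\gamma P)^k$ is invertible with $\Vert(I-(\gamma P)^k)^{-1}\Vert_\mu \le \sum_{j\ge 0}\gamma^{jk} = (1-\gamma^k)^{-1}$ (Neumann series, using $\Vert P\Vert_\mu \le 1$), hence $\Vert A_k V\Vert^2 \ge \mu_{min}\Vert(I-(\gamma P)^k)V\Vert_\mu^2 \ge \mu_{min}(1-\gamma^k)^2\Vert V\Vert_\mu^2 \ge \mu_{min}^2(1-\gamma^k)^2\Vert V\Vert^2$. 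For the latter, a direct computation gives $R_k = \frac{\gamma^k}{2}\big((P^T)^k D_\mu - D_\mu P^k\big)$, and $R_k\mathbf 1 = 0$; so, letting $V' := V - (\mu^T V)\mathbf 1$ be the $\mu$-orthogonal projection of $V$ onto the mean-zero subspace $\{W : \mu^T W = 0\}$, we have $R_k V = R_k V'$. Now $\Vert D_\mu P^k V'\Vert \le \sqrt{\mu_{max}}\,\Vert P^k V'\Vert_\mu$, and introducing the time-reversed chain $Q := D_\mu^{-1}P^T D_\mu$ --- a row-stochastic matrix with the same stationary law $\mu$ and, being similar to $P^T$, the same spectrum as $P$, and satisfying $(P^T)^k D_\mu = D_\mu Q^k$ --- we get $\Vert(P^T)^k D_\mu V'\Vert = \Vert D_\mu Q^k V'\Vert \le \sqrt{\mu_{max}}\,\Vert Q^k V'\Vert_\mu$. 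Both $P$ and $Q$ leave $\{W : \mu^T W = 0\}$ invariant and have spectral radius $\lambda_2(P)$ there (eigenvalue $1$ being simple by Perron-Frobenius), so $\Vert P^k V'\Vert_\mu,\ \Vert Q^k V'\Vert_\mu \le \lambda_2(P)^k\Vert V'\Vert_\mu$, whence $\Vert R_k V\Vert \le \sqrt{\mu_{max}}(\gamma\lambda_2(P))^k\Vert V'\Vert_\mu \le \mu_{max}(\gamma\lambda_2(P))^k\Vert V\Vert$ (using $\Vert V'\Vert_\mu \le \Vert V\Vert_\mu \le \sqrt{\mu_{max}}\Vert V\Vert$). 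Dividing, $\Vert R_k V\Vert/\Vert A_k V\Vert \le \frac{\mu_{max}}{\mu_{min}(1-\gamma^k)}(\gamma\lambda_2(P))^k$ for all $V\ne 0$, which is exactly the asserted lower bound on $\rho_k(\mathcal{M})^{1/2}$.

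The one delicate step --- and what I expect to be the main obstacle to a fully rigorous version of the first inequality --- is $\Vert P^k V'\Vert_\mu \le \lambda_2(P)^k\Vert V'\Vert_\mu$ on the mean-zero subspace. This is immediate when $P$ is normal with respect to the $\mu$-inner product (in particular when the chain is reversible, where it is the spectral theorem), but for a genuinely non-reversible $P$ the $\mu$-operator norm of $P^k$ restricted to $\mathbf 1^{\perp}$ may overshoot $\lambda_2(P)^k$ by a polynomial-in-$k$ factor coming from nontrivial Jordan blocks, with Gelfand's formula only giving $\Vert P^k V'\Vert_\mu \le C_\epsilon(\lambda_2(P)+\epsilon)^k$ for each $\epsilon > 0$. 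This does not affect the third assertion at all, since $\gamma(\lambda_2(P)+\epsilon) < 1$ for small $\epsilon$ and polynomial factors are still swamped, so convergence for $k$ sufficiently large survives; it only means the clean first inequality should be read either under a normality (e.g. reversibility) assumption on $P$ or with $\lambda_2(P)$ relaxed to $\lambda_2(P)+\epsilon$. Everything else --- the Neumann bound, the formula for $R_k$, the reversed-chain bookkeeping, and the transfer of Theorem~\ref{overparam} --- is routine.
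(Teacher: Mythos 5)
Your proof is correct and follows the same overall strategy as the paper's: drop one of the two terms in the numerator of $\rho_k(\mathcal{M})$, lower-bound it by $\mu_{min}(1-\gamma^k)\Vert V\Vert$, upper-bound $\Vert R_k V\Vert$ by $\mu_{max}(\gamma\lambda_2(P))^k\Vert V\Vert$ using $R_k\mathbf{1}=0$ and the spectral gap, and then invoke Theorem~\ref{overparam} with $A_k, S_k, R_k$ in place of $A, S_A, R_A$. The differences are local: the paper keeps the $\Vert S_k V\Vert$ term and bounds $\lambda_{\min}(S_k)\ge \mu_{min}(1-\gamma^k)$ via Gershgorin applied to the rows and columns of $A_k$, whereas you keep the $\Vert A_k V\Vert$ term and get the same constant from a Neumann-series bound on $(I-(\gamma P)^k)^{-1}$ in the $\mu$-norm; either works. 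Your treatment of $R_k$ is actually more careful than the paper's: you make the restriction to the mean-zero subspace explicit via the projection $V'=V-(\mu^T V)\mathbf{1}$ (the paper just says ``we only need to consider $V\perp\mathbf{1}$''), and you handle $(P^T)^k D_\mu$ through the time-reversed chain $Q=D_\mu^{-1}P^T D_\mu$ rather than asserting the bound directly. Most importantly, the ``delicate step'' you flag --- replacing the operator norm of $P^k$ on the mean-zero subspace by $\lambda_2(P)^k$, which for non-normal $P$ is only valid up to a polynomial-in-$k$ (or $(\lambda_2+\epsilon)^k$) correction --- is a genuine gap that is present, unacknowledged, in the paper's proof as well; your observation that it is harmless for the final ``$k$ sufficiently large'' convergence claim but requires either a normality assumption or an $\epsilon$-relaxation for the clean displayed inequality is exactly right.
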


\begin{proof}
To extend Theorem \ref{overparam} to this result, we will show that
\begin{align*}
    \frac{\mu_{min}(1 - \gamma^k)}{\mu_{max}} (\gamma\lambda_2(P))^{-k } < \min_{V \in \R^n \backslash \{0\}} \frac{\Vert S_k V\Vert}{\Vert R_k V\Vert} \leq \rho_k(\mathcal{M})^{1/2}
\end{align*}
First, since $ R_k = \frac{1}{2}((\gamma P^T)^k D_\mu - D_\mu (\gamma P)^k)$ we have $ R_k \textbf{1} = 0$ since $ D_\mu$ is the stationary distribution so that $ (P^T)^kD_\mu \textbf{1} = D_\mu P^k \textbf{1} = \mu$. So we only need to consider $ V \perp \textbf{1}$. Now, we have 
\begin{align*}
        \Vert R_kV\Vert \leq \frac{\gamma^k}{2}\left( \Vert (P^T)^kD_\mu V\Vert + \Vert D_\mu P^k V\Vert \right) \leq \mu_{max}\gamma^k (\lambda_2(P))^k\Vert V \Vert
\end{align*}
where $ \lambda_2 $ is the eigenvalue with second largest magnitude (since 1 is the largest eigenvalue, but it has eigenvector \textbf{1} for P and $ \mu$ for $P^T$). And we have 
\begin{align*}
        \Vert S_k V \Vert \geq \lambda_{min}(S_k)\Vert V \Vert \geq (\mu_{min} - \gamma^{k} \mu_{min})\Vert V \Vert
\end{align*} 
by the Gershgorin circle theorem since $ P^k $ defines a distribution with stationary distribution $ D_\mu$ so that subtracting the off diagonal from the diagonal of $ A_k$ we get $ \lambda_{min}(A_k) \geq \mu_i - \sum_{j}\gamma^{k}D_\mu P^{k}_{ij} = \mu_i - \gamma^{k} \mu_i$ for all $ i$. Since an analogous argument applies to the columns of $ A_k$, we get the above bound on the eigenvalues of $ S_k$. Dividing the bounds we get the first result.

Then by Theorem \ref{overparam}, we get the second result that $\frac{d\Vert V(\theta) - V^*\Vert_{S_k}^2}{dt} < 0 $ when the condition holds. Moreover, the global convergence follows for some sufficiently large $ k $ since the effective reversibility coefficient is increasing exponentially with $ k$.
\end{proof}

\section{Generalized divergence construction}\label{diverge_app}

\generalcounter*

\begin{proof}
Since $ A $ is real, having one non-real eigenvalue implies that $ A $ has a pair of complex eigenvalues $ a+bi, a-bi$ corresponding to eigenvectors $ V_1, V_2$. We know that $ a > 0$ since $ A $ is a non-singular M-matrix and that $ b\neq 0$ by assumption. Define $ U_1 = Re(V_1)$ and $ U_2 = Im(V_1)$ and $ U = \begin{pmatrix}U_1 & U_2 \end{pmatrix}$. Note that $ U_1, U_2$ are linearly independent since $ V_1, V_2$ are so that $ U $ is full rank and that $  A U = U \begin{pmatrix}a & b \\ -b & a \end{pmatrix}$.
To construct a divergent approximator, define 
\begin{align*}
   Q := U (U^TU)^{-1}\begin{pmatrix}-a & b \\ -b & -a  \end{pmatrix}(U^TU)^{-1}U^T
\end{align*}
so that $ Q $ is a matrix of rank 2 with range equal to $ E := \text{span}\{U_1, U_2\}$.

For any $ V \in E$ so that $V = U\begin{pmatrix} x \\ y\end{pmatrix}$ we have $\Vert V\Vert^2 \leq C (x^2 + y^2)$ with $ C > 0$. Then 
\begin{align*}
    V^T Q^T A V &= \begin{pmatrix} x & y\end{pmatrix}U^TU (U^TU)^{-1}\begin{pmatrix}-a & b \\ -b & -a  \end{pmatrix}(U^TU)^{-1} U^T U \begin{pmatrix}a & b \\ -b & a \end{pmatrix} \begin{pmatrix} x \\ y\end{pmatrix}\\ &= - \begin{pmatrix} x & y\end{pmatrix}\begin{pmatrix}a^2 + b^2 & 0 \\ 0 & a^2 + b^2 \end{pmatrix} \begin{pmatrix} x \\ y\end{pmatrix} \leq - (a^2 + b^2)\frac{\Vert V\Vert^2}{C}
\end{align*}

Thus, $Q^TA$ is a matrix of rank 2 with eigenvectors $ V_1, V_2$ and eigenvalues $ (a^2 + b^2)i, (a^2 + b^2)i$. Choosing $ V_0$ in the span of $ V_1, V_2$ we define for $ \theta \in \R$:
\begin{align*}
    V(\theta) = e^{(Q + \epsilon I) \theta}V_0 \in \text{span}\{V_1, V_2\}
\end{align*}
Then for $ V^* = {\bf 0}$, we aply the above to (\ref{TDthetaDot}) to get
\begin{align*}
    \dot \theta = - \nabla V(\theta)^T A V(\theta) = - V(\theta)^T (\epsilon I + Q^T)AV(\theta) > 0
\end{align*}
as long as $ \epsilon < (a^2 + b^2)/C$ and $ V(\theta) \neq {\bf 0}$. This gives us divergence since as $ \theta \to \infty $ we have $ \Vert V(\theta)\Vert \to \infty$. 

Now we just need to show that we can make $ \nabla V(\theta)$ rank $ n-1$. To do this we will note that we can define any function $ \bar V(\bar \theta)$ from $ \R^d \to E^\perp$ for any number of parameters and we still get divergence if we use the function approximator $ V'(\theta, \bar\theta) = V(\theta) + \bar V(\bar \theta)$. This is because $ \frac{\partial V'(\theta, \bar \theta)}{\partial \theta} = (\epsilon I + Q)V(\theta) \perp A \bar V(\bar \theta)$ so that $ \dot \theta$ remains strictly above 0 as before, implying divergence. With this construction, we can build a divergent approximator $ V'$ on $ n $ states with $ \nabla V'(\theta, \bar \theta)$ of rank $ n-1$ everywhere. For example, set $ d = n-2$ and let $ \bar V(\bar \theta)$ be the identity function. 
\end{proof}

\section{Neural networks are homogeneous}\label{homog_app}
\begin{lemma}\citep{homogeneousLemma}
Let $ \sigma: \R\to \R $ be $ h$-homogeneous. Then define a \emph{homogeneous network} $ f:\R^d \to \R^n$ by 
\begin{align}
    f(\theta_1, \cdots, \theta_L) = \sigma( \cdots \sigma(\sigma(\Phi \theta_1)\theta_2)\cdots \theta_L)
\end{align}
where $ \theta_i$ is a $ d_{i-1}\times d_{i+1} $ matrix and  $ \Phi$ is the $ m \times p$ matrix of data points and $ \sigma$ is applied componentwise. Then for all $ 1 \leq i \leq L$
\begin{align}
    f(\theta) = h^{L-i+1} \frac{\partial f(\theta)}{\partial \text{vec}(\theta_i)} \text{vec}(\theta_i) \qquad and \qquad  f(\theta) = \nabla f(\theta) \theta\sum_{i=1}^L h^{L-i+1} /L
\end{align}
where vec maps the matrices $ \theta_i$ to vectors while $ \theta \in \R^d$ is already a vector.  
\end{lemma}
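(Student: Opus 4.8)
The plan is to establish the two identities by induction on the depth $L$, exploiting the chain rule and the defining property of an $h$-homogeneous scalar activation, namely $\sigma(t) = h\,\sigma'(t)\,t$ applied entrywise. First I would set up notation carefully: write $z_0 = \Phi\theta_1$ and $z_j = \sigma(z_{j-1})\theta_{j+1}$ for $1 \le j \le L-1$, so that $f(\theta) = \sigma(z_{L-1})$ when we relabel appropriately, or more uniformly treat $f$ as the composition of $L$ affine-then-$\sigma$ blocks. The key computational fact I will invoke repeatedly is that differentiating $\sigma$ componentwise produces a diagonal matrix $\mathrm{diag}(\sigma'(z))$, and that $h$-homogeneity gives $\sigma(z) = h\,\mathrm{diag}(\sigma'(z))\,z$ as a vector identity.

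For the first identity, $f(\theta) = h^{L-i+1}\,\frac{\partial f(\theta)}{\partial \mathrm{vec}(\theta_i)}\,\mathrm{vec}(\theta_i)$: I would argue that the dependence of $f$ on $\theta_i$ enters only through the product $\sigma(z_{i-2})\theta_i$ (with the convention $z_{-1} = \Phi$), and that this product is then fed through $L-i+1$ further applications of $\sigma$ (the one producing $z_{i-1}$ plus the $L-i$ remaining layers). Each such $\sigma$ application is $h$-homogeneous in its pre-activation argument, and the pre-activation at layer $j \ge i-1$ is linear in $\theta_i$. So scaling $\theta_i \mapsto \alpha\theta_i$ scales the layer-$(i-1)$ pre-activation by $\alpha$, hence $z_{i-1}$ by $\alpha^h$, hence $z_i$ by $\alpha^h$ again (linear in $z_{i-1}$, then $\sigma$), and so on, giving an overall factor $\alpha^{h^{L-i+1}}$ — wait, that is not linear scaling, so instead I should directly apply Euler's theorem: $f$ viewed as a function of $\mathrm{vec}(\theta_i)$ alone is homogeneous of degree $h^{L-i+1}$ in the appropriate multiplicative sense only if a single $\sigma$ intervenes; with multiple nested $\sigma$'s the correct statement is that $f(\ldots, \alpha\theta_i, \ldots)$ is obtained by scaling the layer-$j$ pre-activations, and the Euler identity $\sigma(z) = h\nabla\sigma(z)z$ chains additively through the Jacobian, not multiplicatively. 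The clean route is: apply the chain rule to write $\frac{\partial f}{\partial \mathrm{vec}(\theta_i)}\mathrm{vec}(\theta_i)$ as a telescoping product of Jacobians $J_{L}\cdots J_{i-1}$ times the derivative of the affine map, then use $\sigma(z_j) = h\,J^{\sigma}_j z_j$ at each layer $j = i-1, \ldots, L-1$ to pull out one factor of $h$ per layer and reconstruct $f(\theta)$; this yields exactly $h^{L-i+1}$ factors.

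For the second identity, $f(\theta) = \nabla f(\theta)\,\theta\,\sum_{i=1}^L h^{L-i+1}/L$, I would simply sum the first identity over $i = 1, \ldots, L$: since $\nabla f(\theta)\,\theta = \sum_{i=1}^L \frac{\partial f(\theta)}{\partial \mathrm{vec}(\theta_i)}\mathrm{vec}(\theta_i)$ (the gradient in the full parameter vector decomposes block-wise), and each term equals $h^{-(L-i+1)}f(\theta)$ rearranged, dividing by $L$ gives the averaged form. The main obstacle I anticipate is the bookkeeping in the first identity: correctly identifying which layers' activations are scaled when perturbing $\theta_i$, and confirming that exactly $L-i+1$ applications of the Euler identity $\sigma = h\nabla\sigma\,(\cdot)$ are triggered — in particular handling the boundary case $i=1$ (where $\Phi\theta_1$ is the innermost pre-activation) and $i=L$ (where $\theta_L$ multiplies the last hidden activation and no further $\sigma$ is applied after, giving the single factor $h^{1}$). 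I would verify these edge cases explicitly and treat the inductive step for the generic middle layer as the representative computation.
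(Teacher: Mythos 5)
Your route to the first identity is, after the self-correction, the same as the paper's: contract $\partial f/\partial\,\mathrm{vec}(\theta_i)$ with $\mathrm{vec}(\theta_i)$ via the chain rule and apply the componentwise Euler identity $\sigma'(g)g=\sigma(g)/h$ once per layer from layer $i$ through layer $L$. Your telescoping product of Jacobians is just the paper's induction on the gap $j-i$ unrolled; the key observations are identical (the pre-activation $g^i=f^{i-1}\theta_i$ is linear in $\theta_i$, so the innermost contraction reproduces $g^i$ exactly, and each subsequent $\mathrm{diag}(\sigma'(g^j))g^j$ trades a pre-activation for $f^j/h$). Your layer count $L-i+1$ and the edge cases $i=1$, $i=L$ check out.

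The step that would fail is your derivation of the second identity, and it fails in exactly the way the paper's own last line does. Summing the block identities gives
\begin{align*}
\nabla f(\theta)\,\theta \;=\; \sum_{i=1}^L \frac{\partial f(\theta)}{\partial\,\mathrm{vec}(\theta_i)}\mathrm{vec}(\theta_i) \;=\; f(\theta)\sum_{i=1}^L h^{-(L-i+1)},
\end{align*}
hence $f(\theta)=\bigl(\sum_{i=1}^L h^{-(L-i+1)}\bigr)^{-1}\nabla f(\theta)\,\theta$. ``Dividing by $L$'' does not turn this into $f(\theta)=\nabla f(\theta)\,\theta\cdot\frac{1}{L}\sum_{i=1}^L h^{L-i+1}$: the weighted sum $\sum_i h^{L-i+1}\,\partial_i f\cdot\mathrm{vec}(\theta_i)$ cannot be factored as $(\sum_i h^{L-i+1})\,\nabla f\,\theta$ unless all the weights coincide ($h=1$), and even then the two expressions differ by a factor of $L$ (for ReLU, $h=1$, the correct conclusion is $f=\frac{1}{L}\nabla f\,\theta$, i.e.\ the network is $\tfrac1L$-homogeneous in the sense of Definition~1, not $f=\nabla f\,\theta$). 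The displayed constant in the lemma should be the reciprocal of $\sum_i h^{-(L-i+1)}$ rather than the arithmetic mean of the $h^{L-i+1}$; what Theorem~1 actually needs is only that the network is $H$-homogeneous for \emph{some} constant $H$, which your correct intermediate computation already delivers. You should state and prove that corrected form rather than reproduce the averaged one.
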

\begin{proof}
We will consider $ n = 1$ and note that the result applies for arbitrary $ n $ by applying the result to each component function of $ f$.

Define $ g^i(\theta_1,\dots, \theta_{i}) = \sigma( \cdots \sigma(\sigma(\Phi \theta_1)\theta_2)\cdots \theta_{i-1})\theta_i$, which we will denote simply $g^i$ so that we have $ f = \sigma(g^L)$. Note that $ g^i$ is a $ d_{i+1}$ dimensional vector and $ d_{L+1} = 1$. Define $ f^i(\theta_1,\dots, \theta_{i-1}) = \sigma( \cdots \sigma(\sigma(\Phi \theta_1)\theta_2)\cdots \theta_{i-1})$, and denote it $ f^i$ so that $ f^i = \sigma(g^i)$. Let $ f^0$ be $ \Phi$. Now we proceed by induction on the gap $ j - i$ for $ j \geq i$ from 1 to $ L-1$. 

As the base case, consider $ i = j$. Then we have that
\begin{align*}
    \frac{\partial f^i_a}{\partial \text{vec}(\theta_i)}\text{vec}(\theta_i) &= \frac{\partial f^i_a}{\partial g^i}\frac{\partial g^i}{\partial \text{vec}(\theta_i)}\text{vec}(\theta_i) = \sum_{e}\sum_{b,c}\frac{\partial f^i_a}{\partial g^i_e}\frac{\partial g^i_e}{\partial (\theta_i)_{bc}}(\theta_i)_{bc}\\
    &= \sum_{b,c}\frac{\partial f^i_a}{\partial g^i_a}\frac{\partial g^i_a}{\partial (\theta_i)_{bc}}(\theta_i)_{bc} = \sigma'(g^i_a)\sum_{b}\frac{\partial g^i_a}{\partial (\theta_i)_{ba}}(\theta_i)_{ba}\\
    &= \sigma'(g^i_a)\sum_{b}f^{i-1}_b(\theta_i)_{ba} = \sigma'(g^i_a)g^i_a = \frac{1}{h}\sigma(g^i_a) = \frac{1}{h}f^i_a
\end{align*}

Make the inductive assumption that for $ j - i = k \geq 0$ we have for $ a \leq d_{i+1}$
\begin{align*}
    \frac{\partial f^j_a}{\partial \text{vec}(\theta_i)}\text{vec}(\theta_i) = \frac{1}{h^{k+1}}f^j_a
\end{align*}

Now consider $ j - i = k+1$. We have 
\begin{align*}
    \frac{\partial f^{j}_a}{\partial \text{vec}(\theta_{i})}\text{vec}(\theta_{i}) &= \frac{\partial f^j_a}{\partial f^{j-1}}\frac{\partial f^{j-1}}{\partial \text{vec}(\theta_{i})}\text{vec}(\theta_{i}) = \frac{1}{h^{k+1}} \sum_b \frac{\partial f^j_a}{\partial f^{j-1}_b} f^{j-1}_b\\
    &= \frac{1}{h^{k+1}} \sum_b \sum_c \frac{\partial f^j_a}{\partial g^j_c}\frac{\partial g^j_c}{\partial f^{j-1}_b} f^{j-1}_b = \frac{1}{h^{k+1}} \sum_{b} \sigma'( g^j_a)(\theta_j)_{ba} f^{j-1}_b\\ &= \frac{1}{h^{k+1}} \sigma'( g^j_a)g^j_a = \frac{1}{h^{k+2}}f^j_a
\end{align*}

Taking $ j = L$ gives us the first result that $f(\theta) = h^{L-i + 1} \frac{\partial f(\theta)}{\partial \text{vec}(\theta_i)} \text{vec}(\theta_i)$.
Summing over the $ L$ choices of $ i$ gives $ f(\theta) = \nabla f(\theta)\theta \sum_{i=1}^L h^{L-i+1} /L$.
\end{proof}









\end{document}